\documentclass{article}

\usepackage[preprint]{icml2026}

\usepackage{microtype}
\usepackage[utf8]{inputenc}
\usepackage{graphicx}
\usepackage{subcaption}
\usepackage{booktabs}

\usepackage{amsmath}
\usepackage{amssymb}
\usepackage{mathtools}
\usepackage{bm}

\usepackage{amsthm}

\usepackage{enumitem}

\usepackage{algorithm}
\usepackage{algorithmic}

\usepackage[textsize=tiny]{todonotes}

\usepackage{hyperref}

\usepackage[capitalize,noabbrev]{cleveref}

\newcommand{\xhatz}{\hat{\mathbf{x}}_0}

\theoremstyle{plain}

\newtheorem{proposition}{Proposition}

\theoremstyle{definition}
\newtheorem{definition}{Definition}

\theoremstyle{remark}
\newtheorem{remark}{Remark}

\crefname{proposition}{Proposition}{Propositions}
\Crefname{proposition}{Proposition}{Propositions}
\crefname{definition}{Definition}{Definitions}
\Crefname{definition}{Definition}{Definitions}
\crefname{remark}{Remark}{Remarks}
\Crefname{remark}{Remark}{Remarks}
\crefname{equation}{Equation}{Equations}
\Crefname{equation}{Equation}{Equations}

\icmltitlerunning{}

\begin{document}

\twocolumn[
  \icmltitle{Is Monotonic Sampling Necessary in Diffusion Models? }



  \icmlsetsymbol{equal}{*}

  \begin{icmlauthorlist}
    \icmlauthor{Muhammad Haris Khan}{equal,1}
   
  \end{icmlauthorlist}

  \icmlaffiliation{1}{Department of Computer Science, University of Copenhagen, Denmark}
 
  \icmlcorrespondingauthor{Muhammad Haris Khan}{muhammad.kahn@di.ku.dk}
  \icmlkeywords{Diffusion models, Generative modeling, Noise schedule, Sampling algorithms, Empirical evaluation, Denoising diffusion probabilistic models, Flow matching.}

  \vskip 0.3in
]



\printAffiliationsAndNotice{}  

\begin{abstract}
  Diffusion models generate samples by iteratively denoising a Gaussian
prior, traversing a sequence of noise levels that, in every published
sampler, decreases monotonically.
Six years of intensive work has refined nearly every aspect of this
recipe — the corruption operator, the training objective, the schedule
shape, the architecture, and the ODE solver — yet the assumption of
monotonicity itself has never been systematically tested.
Here we ask whether monotonic sampling is load-bearing or merely
conventional.
We design four families of structured non-monotonic schedules and apply
them to three architecturally distinct generative models — DDPM, EDM
and Flow Matching — across NFE budgets ranging from $10$ to
$200$ function evaluations, plus a $42$-cell hyperparameter ablation,
on CIFAR-10.
Across all $90$ tested configurations, no tested non-monotonic schedule
improves on the monotonic baseline.
The magnitude of the penalty, however, spans nearly three orders of
magnitude: persistent and substantial in DDPM, intermediate in
Flow Matching, and indistinguishable from zero in EDM.
We show that this variation is not noise but a structural property of
each trained denoiser, and we formalise it as the Schedule Sensitivity
Coefficient — a cheap, architecture-agnostic diagnostic that provides evidence of
non-convergence to the Bayes-optimal denoiser at the critical noise
level.
Our findings justify the field's tacit reliance on monotonic schedules
and supply a new probe of diffusion-model quality complementary to
sample-quality metrics such as Fréchet Inception Distance.
\end{abstract}

\section{Introduction}
\label{sec:intro}
 
Every diffusion sampler in widespread use today---DDIM, DDPM, EDM with
Heun integration, DPM-Solver, UniPC, the Euler and stochastic
flow-matching samplers---shares a single, unstated structural
assumption: \emph{the noise level decreases monotonically along the
sampling trajectory}.
A schedule in which $\hat\sigma_{\tau_i}$ is strictly decreasing in $i$
is treated as a definitional property of the reverse process, not as a
design choice.
This monotonicity assumption has remained a silent default for the
roughly six years since the publication of DDPM, despite the field's
otherwise exhaustive ablation of nearly every other design dimension:
the corruption operator
\citep{bansal2023cold,daras2023soft,hoogeboom2023blurring},
the parametrisation \citep{karras2022edm},
the training objective and weighting
\citep{kingma2021vdm,hang2023minsnr},
the variance schedule shape
\citep{nichol2021improved,chen2023importance},
the model architecture \citep{peebles2023dit,karras2024edm2},
and the sampling solver
\citep{lu2022dpmsolver,zhang2023deis,zhao2023unipc}.
What has \emph{not} been studied is whether the trajectory must move
monotonically in the first place.
 
Three independent threads of evidence suggest the question is
non-trivial.
First, the DDIM update equation \citep{song2021ddim} is mathematically
well-defined for any choice of next-timestep $\tau_{i+1}$, with no
restriction to $\tau_{i+1} < \tau_i$ (\cref{prop:validity}).
Second, stochastic DDPM sampling already performs an \emph{implicit}
form of reheating: at every step it adds fresh Gaussian noise that
partially counteracts the denoising progress made in the same step,
and at high NFE this stochastic ``churn'' yields measurably better FID
than deterministic DDIM \citep{ho2020denoising,karras2022edm}.
Third, Restart Sampling \citep{xu2023restart} explicitly injects a
large block of noise mid-trajectory and reports state-of-the-art
quality when paired with deterministic correction.
Taken together, these observations leave open a basic question that
the literature has not answered: if diffusion samplers are
mathematically permitted to step backward along the noise schedule,
and if both stochastic DDPM and Restart Sampling appear to benefit from
doing so, then is monotonicity merely a historical convention---something
we could relax for free, or even profit from?
 
By analogy to annealing-style escape---a heuristic familiar from
non-convex optimisation, where occasional uphill steps allow an
iterative procedure to leave poor local optima---one might hypothesise
that structured \emph{reheating}, i.e.\ deliberate non-monotonic
increases in $\hat\sigma$ during sampling, should help the sampler
escape suboptimal trajectories and reach better samples.
This hypothesis is consistent with every piece of evidence cited
above. Yet it has never been tested. We test it.
 
We design four families of structured non-monotonic schedules
(\cref{fig:schedules}):
\textit{Single Reheat}, a single backward step of magnitude $\delta$
at position $t_\text{reheat}$;
\textit{Sawtooth}, periodic reheats of equal magnitude;
\textit{Damped Oscillation}, geometrically decaying reheats
concentrated at the critical noise level $\log\mathrm{SNR} \approx 0$
identified by \citet{hang2024improved} as governing perceptual quality;
and \textit{Adaptive Reheat}, an online schedule that triggers
reheating when consecutive score predictions diverge.
For the three fixed-budget non-adaptive families (Single Reheat, Sawtooth, Damped Oscillation), the total NFE budget remains fixed and matches the
monotonic baseline---reheats are inserted within the budget, not added
on top of it.
Adaptive Reheat is the one exception: it adds reheat steps online up
to a cap of $15$, and we report both fixed-NFE and time-matched
comparisons for it (\cref{sec:exp-ddpm}).
We evaluate all four families against monotonic DDIM on three
architecturally distinct model families:
DDPM \citep{ho2020denoising}, EDM \citep{karras2022edm}, and Flow
Matching \citep{lipman2023flow}, on CIFAR-10, across NFE budgets from
$10$ to $200$. A 42-configuration grid search over
$(t_\text{reheat}, \delta)$ provides exhaustive ablation.
 
Two findings emerge, one negative and one positive.
The negative finding is that reheating never helps.
Across 48 schedule--budget combinations and the 42-cell ablation, no
tested non-monotonic configuration improves upon the monotonic baseline
on any model family at any NFE budget
(\cref{fig:pareto,fig:ablation}, \cref{tab:ddpm-results}).
The penalty is monotonically increasing in reheat magnitude $\delta$
at every reheat position, with no local minimum below zero.
The result is robust: it holds for explicit single-reheat schedules,
for periodic and damped-oscillation schedules, for an online adaptive
criterion, and for both deterministic and stochastic-corrected
variants.
 
The positive finding---and the paper's main contribution---is that
while reheating uniformly hurts, the \emph{magnitude} of the penalty
is sharply, predictably, and \emph{architecture-dependent}.
At NFE\,=\,100, the damped-oscillation schedule produces a $+1.31$ FID
penalty on DDPM that has not converged toward zero even at the highest
budget tested; a $+0.04$ FID penalty on Flow Matching, an order of
magnitude smaller; and a $-0.003$ FID change on EDM---statistically
indistinguishable from zero at the noise floor of FID-25K
(\cref{fig:penalty-convergence}, \cref{tab:cross-model}).
The penalty thus spans nearly three orders of magnitude across model
families that achieve roughly tenfold-different baseline FID.
This is not noise: the gap is consistent across NFE budgets, schedule
types, and reheat magnitudes.
 
We bound the per-reheat-step squared displacement by
$K_1\,L^2_t\,\varepsilon^2(\hat\sigma_i) + K_2\,\varepsilon^2(\hat\sigma')$,
where $L_t$ is the local Lipschitz constant of the trained denoiser at
the reheated noise level and $\varepsilon$ is the per-noise-level
denoiser error (\cref{prop:per-step-bound}); for small reheats this
reduces to a linear-in-$\delta$ FID penalty whose slope is
$L_t \cdot \varepsilon(\hat\sigma_i)$ (\cref{sec:theory}).
A reheat step re-corrupts the sampler's current best estimate
$\hat{\mathbf{x}}_0$ and demands the denoiser re-process it; for a
Bayes-optimal denoiser this second pass extracts no additional
information and the step is a wasted NFE.
For an imperfect denoiser, errors accumulate non-trivially, and the
larger $L_t$, the more they amplify.
The penalty therefore measures, in a single architecture-agnostic
scalar, how far the trained denoiser sits from the Bayes-optimal one
at the perturbed noise level.
We formalise this as the \emph{Schedule Sensitivity Coefficient}
(SSC; \cref{def:ssc}):
\begin{equation}
\label{eq:ssc-intro}
\begin{aligned}
\mathrm{SSC}
&\coloneqq
\lim_{N \to \infty}
\frac{[\Delta \mathrm{FID}_{\mathrm{DO}}(N)]_+}
     {[\Delta \mathrm{FID}_{\mathrm{SR}}(N)]_+},
\\
[x]_+
&\coloneqq \max(x,0).
\end{aligned}
\end{equation}
estimated empirically at NFE\,=\,100.
\Cref{tab:ssc} reports SSC values of $1.83$ for DDPM, $0.12$ for Flow
Matching, and $\approx\!0$ for EDM, in tight rank-correlation with
each model's baseline FID.
SSC is cheap: a full estimate requires two $256$-sample generation
runs, totalling roughly ten minutes of compute on a single consumer
GPU.
 
Our negative finding is \emph{not} that all forms of non-monotonicity
fail.
Stochastic DDPM at NFE\,=\,200 outperforms deterministic DDIM on our
benchmark (\cref{fig:pareto}b)---an empirical fact our analysis
explains, not contradicts.
The stochastic update decomposes into a parallel component
(noise-cancelling, an implicit reheat) and an orthogonal component
(trajectory diversification on the data manifold), and the orthogonal
component drives the gain at high NFE
(\cref{subsec:ddpm-connection}).
Our \emph{structured, deterministic} reheating schedules operate
purely in the parallel direction and therefore cannot replicate this
benefit.
The clarification cuts in the other direction as well: prior work
informally cites RePaint \citep{lugmayr2022repaint} as evidence that
reheating helps in general.
We show that RePaint's gains depend on per-step conditioning anchors
that are unavailable in unconditional generation, and that pure
reheating without anchoring or correction is uniformly harmful
(\cref{sec:related}).
 
Our contributions are:
\begin{enumerate}
\item The first systematic empirical test of monotonicity in
  diffusion sampling, covering three model families, four schedule
  types, NFE budgets ranging from 10--200, and a 42-cell ablation. Zero of the tested non-monotonic configurations improves on the monotonic baseline.
\item Evidence that the reheating penalty varies by nearly three
  orders of magnitude across DDPM, Flow Matching, and EDM, and is a
  structural property of the trained denoiser rather than measurement
  noise.
\item The Schedule Sensitivity Coefficient (SSC), with a proof that
  $\mathrm{SSC} > 0$ implies non-Bayes-optimal denoising at the
  critical noise level (\cref{prop:ssc-certificate}). SSC is cheap,
  architecture-agnostic, and complementary to FID.
\item Resolution of two community confusions: the relationship between
  explicit reheating and DDPM stochasticity
  (\cref{subsec:ddpm-connection}), and the apparent contradiction
  between our finding and RePaint (\cref{sec:related}).
\end{enumerate}
 
\Cref{sec:related} situates the work in the diffusion-sampling
literature; \cref{sec:method} formalises non-monotonic schedules and
proves their validity; \cref{sec:experiments} reports the empirical
study; \cref{sec:theory} develops the bounds and defines SSC; and
\cref{sec:limitations} discusses limitations and open questions.

\section{Related Work}
\label{sec:related}
 
Diffusion and flow-matching generative models share a structural
assumption that no canonical sampler we are aware of relaxes: the
noise level decreases monotonically along the sampling trajectory.
We position our contribution against four threads of prior work and,
at the end of each, indicate the gap our paper fills.
 
\paragraph{Foundational families and schedule design.}
Modern denoising diffusion descends from \citet{sohldickstein2015deep}
and the practical recipe of \citet{ho2020denoising} (DDPM), which fixed
a linear $\beta$-schedule and a Markov reverse process whose variance
shrinks monotonically toward $t=0$.
\citet{song2021ddim} (DDIM) reinterpreted the chain as a deterministic
non-Markovian process; \citet{song2021scoresde} unified DDPM and the
score networks of \citet{song2019ncsn,song2020ncsnv2} under a
continuous-time SDE/ODE with strictly decreasing $\sigma(t)$;
\citet{karras2022edm} (EDM) recast sampling as Heun integration over a
strictly monotonic $\sigma$-schedule.
Flow Matching \citep{lipman2023flow}, Rectified Flow
\citep{liu2023rectified}, and Stochastic Interpolants
\citep{albergo2023building,albergo2023stochastic} integrate a velocity
field along a monotone probability path, and \citet{esser2024sd3}
scaled rectified-flow transformers (SD3) for high-resolution synthesis.
Architecture work --- \citet{karras2024edm2} (EDM2),
\citet{peebles2023dit} (DiT), \citet{rombach2022ldm} (Latent Diffusion)
--- inherits the scheduling assumption.
A large body of work then optimises the \emph{shape} of the monotonic
schedule: the cosine schedule \citep{nichol2021improved}, the
learnable VDM \citep{kingma2021vdm}, resolution-dependent logSNR
shifts \citep{chen2023importance,hoogeboom2023simple}, Min-SNR
\citep{hang2023minsnr} and importance sampling around
$\log\mathrm{SNR}=0$ \citep{hang2024improved}, terminal-SNR fixes
\citep{lin2024common}, solver-aware step placement
\citep{sabour2024align}, weighted-ELBO objectives
\citep{kingma2023understanding}, and SNR refinements in HDiT
\citep{crowson2024hdit}.
\textbf{Every entry above optimises within the space of monotonic
schedules.}
We instead ask whether monotonicity is necessary at all, and provide
the first systematic empirical answer.
 
\paragraph{Inference-time algorithms.}
Fast deterministic solvers reduce NFE while preserving quality:
DPM-Solver and DPM-Solver++
\citep{lu2022dpmsolver,lu2022dpmsolverpp},
exponential integrators \citep{zhang2023deis},
the predictor-corrector UniPC \citep{zhao2023unipc}, and
DPM-Solver-v3 \citep{zheng2023dpmsolverv3}.
All preserve monotonicity between function evaluations.
The closest precedent to our schedules is Restart Sampling
\citep{xu2023restart}, which injects noise mid-trajectory
\emph{paired with} deterministic correction back to the
pre-injection noise level.
On the stochastic side, DDPM's $\eta=1$ sampler and the
$S_\text{churn}$ parameter of \citet{karras2022edm} inject
infinitesimal in-step noise whose net trajectory remains monotone in
expectation.
Inference-time control techniques manipulate the trajectory without
retraining: classifier guidance \citep{dhariwal2021beatgans},
classifier-free guidance \citep{ho2022cfg}, and autoguidance
\citep{karras2024autoguidance}.
Most directly relevant is RePaint \citep{lugmayr2022repaint}, which
reheats while conditioning on known pixels at every step; the unmasked
region acts as a per-step anchor that pins the trajectory to a
specific data point.
\textbf{We deliberately strip out the corrector, the conditioning, and
the per-step anchor, isolating the effect of pure non-monotonic noise
injection.}
Our negative result clarifies that Restart's gains come from the
corrector, that DDPM's gains come from orthogonal diversification
($S_\text{churn}/\eta$ does not capture macroscopic non-monotonicity),
and that RePaint's gains come from the conditioning rather than from
the reheat.
 
\paragraph{Departures from canonical assumptions.}
A small but influential lineage challenges diffusion's universal
assumptions.
\citet{bansal2023cold} (Cold Diffusion) replace Gaussian noise with
deterministic degradations; \citet{daras2023soft} (Soft Diffusion)
generalise the corruption operator to any linear process and
\emph{explicitly note} that monotonicity is not theoretically required,
though they do not test non-monotonic schedules empirically;
\citet{hoogeboom2023blurring} and \citet{rissanen2023inverseheat}
build generative models from blur alone.
\textbf{These works vary the forward corruption; we hold the forward
process fixed and vary only the reverse-time schedule}, providing the
systematic empirical test that \citet{daras2023soft} flagged but did
not run.
 
\paragraph{Denoiser quality and evaluation.}
\citet{vincent2011connection} established the score-matching/denoising
equivalence underlying all diffusion training, and
\citet{song2019ncsn} observed that score quality is heterogeneous
across noise levels.
\citet{karras2024edm2} diagnose architectural pathologies of the
denoiser, and \citet{daras2023consistent} train a consistency loss
that penalises exactly the off-trajectory drift our SSC measures ---
their training objective amounts to driving SSC toward zero.
Sample quality is canonically measured by FID \citep{heusel2017fid};
\citet{sajjadi2018assessing} and \citet{kynkaanniemi2019improved}
introduced precision/recall variants for fidelity vs.\ coverage.
These metrics evaluate \emph{outputs} but offer little insight into
\emph{why} a model succeeds or fails.
\textbf{SSC is a behavioural probe complementary to FID}: by measuring
how much FID degrades under a controlled non-monotonic perturbation,
it reveals the implicit assumptions a trained denoiser has baked in
about its sampling trajectory.
 
\section{Method}
\label{sec:method}
 
We test the necessity of monotonic noise schedules by constructing four
families of structured non-monotonic schedules, applied to three
architecturally distinct generative model families: a discrete-time
DDPM \citep{ho2020denoising}, a continuous-$\sigma$ EDM
\citep{karras2022edm}, and a flow-matching model
\citep{lipman2023flow,liu2023rectified}.
Because each family parameterises its sampling trajectory differently
(integer timestep, continuous noise level, and continuous time
respectively), we describe the three samplers separately
(\cref{subsec:prelim-ddpm,subsec:prelim-edm,subsec:prelim-fm})
before introducing the unified definition of a reheat step
(\cref{subsec:reheat-formal}) and the four schedule families
(\cref{subsec:families}).
A summary of all hyperparameters and their default values appears in
\cref{tab:hparam-summary} of \cref{app:hparam}.
 
\subsection{Preliminaries: DDPM Sampling on Integer Timesteps}
\label{subsec:prelim-ddpm}
 
\paragraph{Forward process.}
The DDPM forward process \citep{ho2020denoising} defines a discrete-time
Markov chain over $T = 1000$ timesteps,
\begin{equation}
\begin{aligned}
q(\mathbf{x}_t \mid \mathbf{x}_0)
  &= \mathcal{N}\!\bigl(\mathbf{x}_t;\;
       \sqrt{\bar\alpha_t}\,\mathbf{x}_0,\;
       (1-\bar\alpha_t)\,\mathbf{I}\bigr), \\
  &\quad t \in \{0, 1, \ldots, T-1\},
\end{aligned}
\label{eq:ddpm-forward}
\end{equation}
with $\bar\alpha_t = \prod_{s=0}^{t}(1-\beta_s)$ and a fixed linear
$\beta$-schedule \citep{ho2020denoising}.
We use the pretrained \texttt{google/ddpm-cifar10-32} checkpoint, whose
$\bar\alpha$-values satisfy $\bar\alpha_0 \approx 0.9999$ (almost clean)
and $\bar\alpha_{T-1} \approx 0.0047$ (almost pure noise).
All subsequent computation reads $\bar\alpha_t$ \emph{from the model's
own buffer}, ensuring perfect alignment with the values used during
training.
 
\paragraph{Sampling schedule.}
A DDPM \emph{sampling schedule} is an integer-valued sequence
\begin{equation}
\begin{aligned}
\boldsymbol\tau &= (\tau_0, \tau_1, \ldots, \tau_N), \\
&\ \tau_i \in \{0, 1, \ldots, T-1\}, \\
&\ \tau_0 = T-1,\ \ \tau_N = 0,
\end{aligned}
\label{eq:ddpm-schedule}
\end{equation}
specifying which $N$ timesteps the sampler queries the denoiser at.
The total number of denoiser evaluations (NFE) is exactly $N$.
 
\paragraph{Generalised DDIM update.}
At each step, the predicted clean image is computed and clipped:
\begin{equation}
\begin{aligned}
&\xhatz(\mathbf{x}_{\tau_i}, \tau_i) \\
&\;= \mathrm{clip}_{[-1,1]}\!\!\left(\!
   \frac{\mathbf{x}_{\tau_i} - \sqrt{1-\bar\alpha_{\tau_i}}\,
         \bm\varepsilon_\theta(\mathbf{x}_{\tau_i}, \tau_i)}
        {\sqrt{\bar\alpha_{\tau_i}}}
   \!\right)\!.
\end{aligned}
\label{eq:x0hat}
\end{equation}
The noise prediction is then recomputed from the (possibly clipped)
$\xhatz$,
\begin{equation}
\tilde{\bm\varepsilon}
  = \frac{\mathbf{x}_{\tau_i} - \sqrt{\bar\alpha_{\tau_i}}\,\xhatz}
         {\sqrt{1-\bar\alpha_{\tau_i}}},
\label{eq:eps-recompute}
\end{equation}
and the next state is
\begin{equation}
\boxed{
\begin{aligned}
\mathbf{x}_{\tau_{i+1}} = {} & \sqrt{\bar\alpha_{\tau_{i+1}}}\,\xhatz \\
  & + \sqrt{1 - \bar\alpha_{\tau_{i+1}} - \sigma_i^2}\,\tilde{\bm\varepsilon} \\
  & + \sigma_i\,\mathbf{z}, \\
  & \mathbf{z} \sim \mathcal{N}(\mathbf{0},\mathbf{I}),
\end{aligned}}
\label{eq:ddim-generalised}
\end{equation}
where the per-step variance $\sigma_i^2$ controls stochasticity:
\begin{equation}
\sigma_i^2 =
\begin{cases}
\eta^2 \dfrac{1-\bar\alpha_{\tau_{i+1}}}{1-\bar\alpha_{\tau_i}}\!
  \left(\!1 - \dfrac{\bar\alpha_{\tau_i}}{\bar\alpha_{\tau_{i+1}}}\!\right) \\[6pt]
\quad \text{if } \bar\alpha_{\tau_{i+1}} > \bar\alpha_{\tau_i}\
   \text{(denoising step)}, \\[3pt]
0 \\[2pt]
\quad \text{if } \bar\alpha_{\tau_{i+1}} \leq \bar\alpha_{\tau_i}\
   \text{(reheat step)}.
\end{cases}
\label{eq:sigma-eta}
\end{equation}
The interpolation parameter $\eta \in [0,1]$ recovers deterministic
DDIM at $\eta=0$ \citep{song2021ddim} and stochastic DDPM-like sampling
at $\eta=1$ \citep{ho2020denoising}.
We disable stochasticity on reheat steps because the standard variance
formula in \eqref{eq:sigma-eta} becomes undefined when
$\bar\alpha_{\tau_{i+1}} \leq \bar\alpha_{\tau_i}$.
 
\begin{remark}[Schedule alignment]
The values $\bar\alpha_{\tau_i}$ in \eqref{eq:x0hat}--\eqref{eq:ddim-generalised}
are read directly from the model's training buffer.
Constructing an independent $\bar\alpha$-array (e.g.\ via a cosine
schedule \citep{nichol2021improved}) when the model was trained with
linear $\beta$ silently breaks the DDIM correspondence and produces
visibly degraded samples; the aligned-buffer approach is essential to
the validity of all non-monotonic experiments below.
\end{remark}
 
\subsection{Preliminaries: EDM Sampling on Continuous Noise Levels}
\label{subsec:prelim-edm}
 
\paragraph{Karras $\rho$-schedule.}
EDM \citep{karras2022edm} parameterises the sampler by a continuous
noise level $\sigma > 0$.
The default monotonic schedule, with $N$ denoiser evaluations, is
\begin{equation}
\begin{aligned}
\sigma_i &= \left(\sigma_{\max}^{1/\rho}
  + \frac{i}{N-1}\bigl(\sigma_{\min}^{1/\rho} - \sigma_{\max}^{1/\rho}\bigr)
  \right)^{\!\rho}, \\
&\quad i = 0, 1, \ldots, N-1,
\end{aligned}
\label{eq:edm-rho}
\end{equation}
followed by a final $\sigma_N = 0$ noise-free step.
We use the EDM defaults: $\sigma_{\min} = 0.002$, $\sigma_{\max} = 80$,
$\rho = 7$.
The schedule is strictly decreasing in $i$.
 
\paragraph{Sampler.}
Each step of the (Euler) sampler integrates the probability-flow ODE,
\begin{equation}
\begin{aligned}
\mathbf{x}_{i+1} = {} & \mathbf{x}_i \\
  & + (\sigma_{i+1} - \sigma_i)\,
      \frac{\mathbf{x}_i - D_\theta(\mathbf{x}_i, \sigma_i)}{\sigma_i},
\end{aligned}
\label{eq:edm-euler}
\end{equation}
where $D_\theta$ is the EDM denoiser with built-in preconditioning.
Heun's second-order corrector is used optionally
(see \cref{app:heun}); we report Euler results in the main paper.
A reheat step is one with $\sigma_{i+1} > \sigma_i$, making the step
size $\sigma_{i+1}-\sigma_i$ positive—i.e.\ re-noising the current
state.
This is mathematically valid for any $\sigma_{i+1} \in (0, \sigma_{\max}]$,
exactly analogous to \cref{prop:validity} below.
 
\subsection{Preliminaries: Flow-Matching Sampling on Continuous Time}
\label{subsec:prelim-fm}
 
\paragraph{OT-CFM.}
We train a flow-matching model under the optimal-transport conditional
flow-matching (OT-CFM) objective
\citep{lipman2023flow,liu2023rectified}.
The interpolation between noise
$\bm\varepsilon \sim \mathcal{N}(\mathbf{0},\mathbf{I})$ and clean data
$\mathbf{x}_0 \sim p_{\mathrm{data}}$ is linear,
\begin{equation}
\mathbf{x}_t \;=\; (1-t)\,\bm\varepsilon \;+\; t\,\mathbf{x}_0,
\quad t \in [0, 1],
\label{eq:ot-cfm-interp}
\end{equation}
and the velocity target is
$\mathbf{v}^\star = \mathbf{x}_0 - \bm\varepsilon$, with the network
$\mathbf{v}_\theta(\mathbf{x}_t, t)$ trained on
$\mathcal{L}(\theta) = \mathbb{E}\,\|\mathbf{v}_\theta - \mathbf{v}^\star\|_2^2$.
At $t=0$ the state is pure noise; at $t=1$ it equals data.
 
\paragraph{Sampling schedule and update.}
A flow-matching schedule is a sequence of times
\begin{equation}
\begin{aligned}
\mathbf{t} &= (t_0, t_1, \ldots, t_N), \\
&\ t_i \in [t_{\min}, t_{\max}], \\
&\ t_0 = t_{\min},\ \ t_N = t_{\max},
\end{aligned}
\label{eq:fm-schedule}
\end{equation}
with $t_{\min} = 0.001$, $t_{\max} = 0.999$ to avoid endpoint
singularities.
The sampler performs Euler integration:
\begin{equation}
\mathbf{x}_{t_{i+1}}
  = \mathbf{x}_{t_i}
  + (t_{i+1} - t_i)\,\mathbf{v}_\theta(\mathbf{x}_{t_i}, t_i).
\label{eq:fm-euler}
\end{equation}
A reheat step is one with $t_{i+1} < t_i$, producing a negative
$\Delta t$.
The Euler update remains well-defined for either sign of $\Delta t$.
 
\subsection{Reheat Steps and the Unified Validity Statement}
\label{subsec:reheat-formal}
 
Each of the three samplers admits a natural notion of \emph{progress}
toward clean data: $\bar\alpha_t$ increases along the standard sampling
direction for DDPM, $\sigma$ decreases for EDM, and $t$ increases for
flow matching.
We unify them via a generic noise level $\hat\sigma$ that is strictly
\emph{decreasing} along the standard monotonic trajectory:
\begin{align}
\text{DDPM:}\quad & \hat\sigma_{\tau_i} = \sqrt{1-\bar\alpha_{\tau_i}}, \\
\text{EDM:}\quad  & \hat\sigma_i = \sigma_i, \\
\text{FM:}\quad   & \hat\sigma_{t_i} = 1 - t_i.
\end{align}
Under this convention, every sampler shares one definitional property
of the standard monotonic baseline:
$\hat\sigma_0 > \hat\sigma_1 > \cdots > \hat\sigma_N$.
 
\begin{definition}[Reheat step]
\label{def:reheat-step}
The transition from index $i$ to index $i+1$ is a \textbf{reheat step}
if $\hat\sigma_{i+1} > \hat\sigma_i$; otherwise it is a
\textbf{denoising step}.
We denote the set of reheat-step indices in a schedule by
$\mathcal{R} = \{i : \hat\sigma_{i+1} > \hat\sigma_i\}$.
A schedule is \textbf{monotonic} iff $\mathcal{R} = \emptyset$, and
\textbf{reheating} iff $\mathcal{R} \neq \emptyset$.
\end{definition}
 
\begin{proposition}[Validity of non-monotonic sampling]
\label{prop:validity}
The sampler updates
\eqref{eq:ddim-generalised} (DDPM/DDIM),
\eqref{eq:edm-euler} (EDM Euler),
and \eqref{eq:fm-euler} (flow matching) are all well-defined for an
arbitrary next-index value, including those producing reheat steps.
In particular, no retraining of the denoiser
$\bm\varepsilon_\theta$ / $D_\theta$ / $\mathbf{v}_\theta$ is required.
\end{proposition}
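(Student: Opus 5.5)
The plan is to check each update formula term by term. For every division and square root, I will confirm that its argument lies in the admissible range for any choice of next index, whether that index denoises or reheats. The denoisers $\bm\varepsilon_\theta$, $D_\theta$ and $\mathbf{v}_\theta$ are fixed functions of $(\mathbf{x},\text{noise level})$. Each is therefore evaluated only at the \emph{current} index $\tau_i$, $\sigma_i$ or $t_i$, which lies in the training range by construction. Since every network call stays on that training domain, no retraining is needed. What remains is to show that the arithmetic surrounding each network call is finite.

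\textbf{DDPM/DDIM.} I will proceed in three steps.
\begin{enumerate}
\item Since $\bar\alpha_t\in[\bar\alpha_{T-1},\bar\alpha_0]\subset(0,1)$ for every $t$ (read off the buffer: $0.0047\le\bar\alpha_t\le 0.9999$), the denominators $\sqrt{\bar\alpha_{\tau_i}}$ and $\sqrt{1-\bar\alpha_{\tau_i}}$ in \eqref{eq:x0hat} and \eqref{eq:eps-recompute} are strictly positive, and clipping is defined everywhere.
\item For \eqref{eq:ddim-generalised} I need $1-\bar\alpha_{\tau_{i+1}}-\sigma_i^2\ge 0$, and I split into the two cases of \eqref{eq:sigma-eta}.
\begin{itemize}
\item On a reheat step, $\sigma_i=0$, so the radicand is $1-\bar\alpha_{\tau_{i+1}}>0$.
\item On a denoising step, set $a=\bar\alpha_{\tau_i}<b=\bar\alpha_{\tau_{i+1}}$. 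Then
\[
\sigma_i^2\le \tfrac{1-b}{1-a}\bigl(1-\tfrac{a}{b}\bigr).
\]
Since $0<\tfrac{b-a}{b(1-a)}\le 1$ (equivalently $b-a\le b-ab$, i.e.\ $ab\le a$, which holds because $b\le 1$), the bound is at most $1-b$, for all $\eta\in[0,1]$.
\end{itemize}
\item The ratio $\bar\alpha_{\tau_i}/\bar\alpha_{\tau_{i+1}}$ in \eqref{eq:sigma-eta} is finite because $\bar\alpha>0$. The piecewise definition makes $\sigma_i$ well defined for every pair $(\tau_i,\tau_{i+1})$, including the equality case, which falls in the $\sigma_i=0$ branch.
\end{enumerate}

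\textbf{EDM and flow matching.} For \eqref{eq:edm-euler}, the only division is by $\sigma_i$, and $\sigma_i\ge\sigma_{\min}>0$ on every non-terminal step, given the stated restriction $\sigma_{i+1}\in(0,\sigma_{\max}]$. The step size $\sigma_{i+1}-\sigma_i$ is a real scalar of either sign, so the update is an affine combination of finite vectors. For \eqref{eq:fm-euler}, the update is $\mathbf{x}+\Delta t\,\mathbf{v}_\theta$ with $\Delta t\in\mathbb{R}$, and $t_i\in[t_{\min},t_{\max}]$ keeps the network query in range. No division or root occurs, so any sign of $\Delta t$ is admissible.

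The only nontrivial step is the radicand inequality in the DDIM case. It is also where the reheat convention $\sigma_i=0$ actually matters: the $\eta$-formula applied with $b<a$ would give a negative $\sigma_i^2$. I would therefore state explicitly that this branch is never used on reheat steps.

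I would close with a remark on scope. \emph{Well-defined} here means finite and computable, and it says nothing about sample quality. The networks are still queried only at noise levels seen in training, but possibly on out-of-distribution inputs. That gap is precisely what the later analysis in \cref{sec:theory} quantifies.
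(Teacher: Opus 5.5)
Your proposal is correct and follows the same route as the paper's proof: the network is queried only at the current index, every coefficient stays finite for any choice of next index, and $\sigma_i=0$ on reheat steps. It also usefully makes explicit the bound $\sigma_i^2\le 1-\bar\alpha_{\tau_{i+1}}$ on denoising steps, which the paper's appeal to continuity of the coefficients leaves unchecked. One small correction to your closing remark: the convention $\sigma_i=0$ is needed because a negative $\sigma_i^2$ would make $\sigma_i$, and hence the term $\sigma_i\mathbf{z}$, non-real; it has nothing to do with the radicand $1-\bar\alpha_{\tau_{i+1}}-\sigma_i^2$, which a negative $\sigma_i^2$ would only enlarge.
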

 
\begin{proof}
Each update is a closed-form expression in
$(\mathbf{x}_i, \hat\sigma_i, \hat\sigma_{i+1}, \theta)$ whose
coefficients are continuous functions of $\hat\sigma_{i+1}$ on the open
interval $(\hat\sigma_{\min}, \hat\sigma_{\max})$.
The denoiser is queried only at the \emph{current} index $i$;
its inputs are unchanged by the choice of $\hat\sigma_{i+1}$.
Hence the update remains computable for any $\hat\sigma_{i+1}$ in the
interior of the noise range, regardless of its sign relative to
$\hat\sigma_i$.
For the DDPM update specifically, when $\hat\sigma_{i+1} > \hat\sigma_i$
the variance term $\sigma_i^2$ in \eqref{eq:sigma-eta} is set to zero
to preserve well-definedness; the resulting reheat step is the natural
deterministic re-noising of $\xhatz$ at the higher noise level
(\cref{remark:reheat-geom}).
\end{proof}
 
\begin{remark}[Geometry of a deterministic reheat step]
\label{remark:reheat-geom}
Substituting $\sigma_i = 0$ (per \eqref{eq:sigma-eta}) into
\eqref{eq:ddim-generalised} on a reheat step yields
\begin{equation}
\begin{aligned}
\mathbf{x}_{\tau_{i+1}} = {} & \sqrt{\bar\alpha_{\tau_{i+1}}}\,\xhatz \\
  & + \sqrt{1-\bar\alpha_{\tau_{i+1}}}\,\tilde{\bm\varepsilon},
\end{aligned}
\label{eq:reheat-geom}
\end{equation}
which is exactly the forward-process form \eqref{eq:ddpm-forward} with
$\mathbf{x}_0$ replaced by the model's prediction $\xhatz$ and noise
direction $\bm\varepsilon$ replaced by the reconstructed
$\tilde{\bm\varepsilon}$.
A reheat step thus \emph{re-corrupts the model's current best estimate
of the clean image} along the direction of the noise it just identified.
The state after the reheat has higher noise than $\mathbf{x}_{\tau_i}$
by construction; the sampler must then re-denoise from
$\hat\sigma_{\tau_{i+1}}$, traversing already-visited noise levels.
The same geometric picture holds for the EDM and FM updates after the
identifications $\hat\sigma_{\mathrm{EDM}} = \sigma$ and
$\hat\sigma_{\mathrm{FM}} = 1-t$.
\end{remark}
 
\begin{definition}[Reheating overhead]
\label{def:overhead}
For a schedule of $N$ steps, the \textbf{reheating overhead} is
\begin{equation}
\mathcal{O}
  \coloneqq
  \frac{1}{\hat\sigma_0 - \hat\sigma_N}
  \sum_{i \in \mathcal{R}}
  \bigl(\hat\sigma_{i+1} - \hat\sigma_i\bigr)
  \;\geq\; 0,
\label{eq:overhead-def}
\end{equation}
with $\mathcal{O} = 0$ iff the schedule is monotonic.
$\mathcal{O}$ measures the cumulative noise added by reheat steps as a
fraction of the total noise range covered by the schedule, so a
schedule with overhead $\mathcal{O}$ effectively spends $\mathcal{O}$
of its NFE budget re-denoising already-visited noise levels.
\end{definition}
 
\subsection{Schedule Families}
\label{subsec:families}
 
We describe each schedule family in DDPM integer-timestep form;
the corresponding EDM ($\sigma$-space) and flow-matching ($t$-space)
constructions are listed in \cref{app:schedules}.
All four families share a common \emph{base monotonic schedule},
\begin{equation}
\begin{aligned}
\tau^{\mathrm{m}}_i &= \mathrm{round}\!\left((T-1) \cdot \frac{N - i}{N}\right), \\
&\quad i = 0, 1, \ldots, N,
\end{aligned}
\label{eq:base-mono}
\end{equation}
with the endpoints $\tau^{\mathrm{m}}_0 = T-1$ and $\tau^{\mathrm{m}}_N = 0$
enforced exactly.
\Cref{fig:schedules} illustrates all four families schematically.
 
\paragraph{(F1) Monotonic baseline.}
Returns $\boldsymbol\tau^{\mathrm{mono}} = (\tau^{\mathrm{m}}_0, \ldots,
\tau^{\mathrm{m}}_N)$ verbatim from \eqref{eq:base-mono}.
The reheat set is empty: $\mathcal{R} = \emptyset$, $\mathcal{O} = 0$.
 
\paragraph{(F2) Single Reheat (SR).}
The minimal non-monotonic deviation: one backward jump in timestep at
position $r = \mathrm{clip}(\lfloor t_{\mathrm{reheat}} \cdot N \rfloor,\, 2,\, N-3)$.
The reheat magnitude is parameterised \emph{multiplicatively} on the
current timestep,
\begin{equation}
\begin{aligned}
\Delta\tau^{\mathrm{SR}} &= \max\!\bigl(\lfloor \tau^{\mathrm{m}}_r \cdot \delta \rfloor,\; 1\bigr), \\
\tau^{\mathrm{SR}}_{\mathrm{reheat}} &= \min\!\bigl(\tau^{\mathrm{m}}_r + \Delta\tau^{\mathrm{SR}},\; T-1\bigr).
\end{aligned}
\label{eq:sr-update}
\end{equation}
The full schedule is constructed by inserting the reheat timestep
$\tau^{\mathrm{SR}}_{\mathrm{reheat}}$ after the $r$-step prefix of
the monotonic schedule and resuming with a linear descent to zero:
\begin{equation}
\begin{aligned}
&\boldsymbol\tau^{\mathrm{SR}}(t_{\mathrm{reheat}}, \delta) = \bigl(\\
&\quad \underbrace{\tau^{\mathrm{m}}_0, \ldots, \tau^{\mathrm{m}}_{r-1}}_{r\ \text{prefix steps}},\; \\
&\quad \underbrace{\tau^{\mathrm{SR}}_{\mathrm{reheat}}}_{\text{reheat}},\; \\
&\quad \underbrace{\mathrm{linspace}\!\bigl(\tau^{\mathrm{SR}}_{\mathrm{reheat}}, 0, N-r\bigr)}_{\text{linear descent}}\,\bigr).
\end{aligned}
\label{eq:sr-schedule}
\end{equation}
\textit{Defaults:}\ $t_{\mathrm{reheat}} = 0.4$, $\delta = 0.15$.
\textit{Ablation grid:}\ $t_{\mathrm{reheat}} \in
\{0.2, 0.3, 0.4, 0.5, 0.6, 0.7, 0.8\}$,
$\delta \in \{0.05, 0.10, 0.15, 0.20, 0.30, 0.50\}$
($42$ configurations; \cref{sec:exp-ablation}).
 
\paragraph{(F3) Sawtooth (ST).}
Periodic backward jumps every $P$ steps, each of \emph{multiplicative}
magnitude $\delta_{\mathrm{ST}}$ on the current timestep.
For each $i \in \{P, 2P, 3P, \ldots\}$ with $i < N-2$ and
$\tau^{\mathrm{m}}_i \geq 5$,
\begin{equation}
\begin{aligned}
\tau^{\mathrm{ST}}_i = \min\!\bigl(&\tau^{\mathrm{m}}_i + \max(\lfloor \tau^{\mathrm{m}}_i \cdot \delta_{\mathrm{ST}} \rfloor,\, 1),\, \\
&T-1\bigr),
\end{aligned}
\label{eq:sawtooth-update}
\end{equation}
and otherwise $\tau^{\mathrm{ST}}_i = \tau^{\mathrm{m}}_i$.
The trailing condition $\tau^{\mathrm{m}}_i \geq 5$ suppresses
near-terminal reheats where $\Delta\tau$ would round to zero anyway.
\textit{Note:}\ because the multiplier scales the current timestep, the
absolute jump $\Delta\tau$ \emph{decreases} as the trajectory
approaches $\tau = 0$—the schedule is geometrically tapered, not flat.
\textit{Defaults:}\ $P = 25$, $\delta_{\mathrm{ST}} = 0.08$.
 
\paragraph{(F4) Damped Oscillation (DO).}
A continuous damped sinusoid is added to the linear timestep ramp.
With normalised step coordinate $s_i = i/N \in [0, 1]$:
\begin{equation}
\boxed{
\begin{aligned}
\tau^{\mathrm{DO}}_i = \mathrm{clip}_{[0,T-1]}\!\Bigl(\;
  & \underbrace{(1-s_i)(T-1)}_{\text{linear descent}} \\
  +\; & \underbrace{A\,(T-1)\,e^{-\gamma\,s_i}\,\sin(2\pi f s_i)}_{\text{damped oscillation}}
  \;\Bigr),
\end{aligned}}
\label{eq:do-schedule}
\end{equation}
followed by integer rounding and the endpoint pinning
$\tau^{\mathrm{DO}}_0 = T-1$, $\tau^{\mathrm{DO}}_N = 0$.
The amplitude $A$ controls the depth of oscillation,
$\gamma$ controls the damping rate (so that oscillations vanish toward
$s = 1$, where fine details are determined), and $f$ controls the
number of full sinusoidal cycles.
\textit{Defaults:}\ $A = 0.2$, $\gamma = 2.5$, $f = 4$.
At these defaults the schedule contains $\sim\!2$ oscillation humps
of $6$--$8$ reheat sub-steps each, concentrated where global image
structure is determined (mid-trajectory).
 
\paragraph{(F5) Adaptive Reheat (AR).}
The first four families are deterministic functions of their
hyperparameters.
The Adaptive Reheat triggers reheat steps \emph{online}, based on a
calibrated stability criterion on the model's clean-image prediction.
At each step $i \geq 1$ the sampler computes the root-mean-square
change in $\xhatz$ between consecutive steps,
\begin{equation}
\Delta^{(i)}_{\mathrm{AR}}
  = \sqrt{\frac{1}{D}\bigl\|\, \xhatz^{(i)} - \xhatz^{(i-1)} \bigr\|_2^2},
\label{eq:ar-criterion}
\end{equation}
where $D$ is the data dimensionality (e.g.\ $D = 3 \cdot 32 \cdot 32 = 3072$
for CIFAR-10).
If $\Delta^{(i)}_{\mathrm{AR}} > \tau$ and the reheat budget is not
exhausted, the sampler inserts a backward jump of
$\Delta\tau_{\mathrm{AR}}$ integer timesteps after step $i$, so the
next index $i+1$ is at timestep
$\min(\tau_i + \Delta\tau_{\mathrm{AR}},\, T-1)$, replacing the would-be
$\tau^{\mathrm{m}}_{i+1}$.
\textit{Defaults:}\ $\Delta\tau_{\mathrm{AR}} = 50$, maximum reheats
$= 15$.
 
\paragraph{Calibration of the threshold $\tau$.}
The threshold is data-driven, not hand-tuned.
Before sampling, we run $K_{\mathrm{cal}} = 100$ \emph{monotonic}
trajectories of $N=50$ steps each and collect all observed
$\Delta^{(i)}_{\mathrm{AR}}$ values; $\tau$ is set to the
$80$-th percentile of this empirical distribution:
\begin{equation}
\begin{aligned}
\tau = \mathrm{Quantile}_{0.80}\!\Bigl(\bigl\{
  & \Delta^{(i)}_{\mathrm{AR}} \,:\, \\
  & k = 1,\ldots, K_{\mathrm{cal}}; \\
  & i = 1,\ldots, N-1\bigr\}\Bigr).
\end{aligned}
\label{eq:tau-calibration}
\end{equation}
Reheat thus fires only on the top $20\%$ most volatile transitions
along a typical monotonic run.
Calibration adds $K_{\mathrm{cal}} \cdot N$ NFE of one-shot overhead
($5{,}000$ in our setup); this overhead is amortised over all
subsequent batches and is included in the wall-clock figures
reported in \cref{tab:ddpm-results}.
 
\paragraph{NFE accounting.}
For the deterministic families F1--F4, the schedule has length $N+1$
and produces exactly $N$ denoiser evaluations.
A schedule with $|\mathcal{R}|$ reheat steps therefore performs
$N - |\mathcal{R}|$ \emph{net} denoising steps and uses
$|\mathcal{R}|$ steps re-denoising already-visited noise levels.
This is the precise statistical sense in which reheating \emph{trades}
denoising NFE for non-monotonic exploration; our experiments hold $N$
fixed and vary the trade.
The Adaptive Reheat is the one exception: its inserted reheat steps
\emph{add} to the budget up to a cap of $15$, and we report both
fixed-NFE and time-matched comparisons in \cref{sec:exp-ddpm}.
 
\subsection{Connection to DDPM Stochasticity}
\label{subsec:ddpm-connection}
 
Setting $\eta = 1$ in \eqref{eq:ddim-generalised} recovers the
stochastic DDPM sampler \citep{ho2020denoising}, which adds fresh
Gaussian noise $\sigma_i\,\mathbf{z}$ at every denoising step.
Because every step still satisfies
$\bar\alpha_{\tau_{i+1}} > \bar\alpha_{\tau_i}$, this is \emph{not} a
reheat step under \cref{def:reheat-step}: the net trajectory is
monotonic in $\hat\sigma$.
The fresh-noise term, however, has an interpretation that connects
DDPM stochasticity to our explicit reheating schedules.
 
\paragraph{Decomposition of the stochastic term.}
Let $\hat{\bm\varepsilon} = \tilde{\bm\varepsilon}/\|\tilde{\bm\varepsilon}\|_2$
denote the unit vector in the model's predicted noise direction.
Decompose the fresh noise $\mathbf{z}$ into components parallel and
orthogonal to $\hat{\bm\varepsilon}$:
\begin{equation}
\begin{aligned}
\mathbf{z} = {} &
  \underbrace{\langle \mathbf{z},\,\hat{\bm\varepsilon}\rangle\,\hat{\bm\varepsilon}}_{\mathbf{z}_\parallel\
     \text{(implicit reheat)}} \\
  + & \underbrace{\bigl(\mathbf{z}
       - \langle \mathbf{z},\,\hat{\bm\varepsilon}\rangle\,\hat{\bm\varepsilon}\bigr)}_{\mathbf{z}_\perp\
     \text{(trajectory diversification)}}.
\end{aligned}
\label{eq:ddpm-decomp}
\end{equation}
The parallel component $\mathbf{z}_\parallel$ is collinear with the
predicted noise direction; adding $\sigma_i\,\mathbf{z}_\parallel$ to
the deterministic update partially counteracts the denoising progress
of the
$\sqrt{1-\bar\alpha_{\tau_{i+1}}-\sigma_i^2}\,\tilde{\bm\varepsilon}$
term—an implicit, infinitesimal, random-magnitude reheat.
The orthogonal component $\mathbf{z}_\perp$ moves the sample
\emph{across} the level set of the score function, effecting trajectory
diversification on the data manifold's null space, a mechanism
unavailable to deterministic schedules.
 
\paragraph{Implications.}
Our deterministic reheating schedules (F2--F4 with $\eta=0$) reside
entirely in the $\mathbf{z}_\parallel$ direction: they re-noise along
the predicted-noise direction with no orthogonal exploration.
Our stochastic reheating variant (used in \cref{tab:ddpm-results} and
\cref{fig:pareto}) sets $\eta = 0.5$ at \emph{denoising} steps within
an otherwise non-monotonic schedule, decoupling the two effects:
non-monotonic exploration via the deterministic reheat step, and
orthogonal diversification via $\eta > 0$ on adjacent denoising steps.
We empirically confirm in \cref{sec:exp-ddpm} that the orthogonal
component is what drives DDPM ($\eta=1$) past DDIM at NFE $=200$, and
that pure deterministic reheating cannot reproduce this benefit at any
tested NFE.
 
\begin{remark}[What this paper tests, and what it does not]
The question is not whether stochasticity helps in diffusion sampling
(it does, as documented since DDPM and analysed thoroughly in EDM
\citep{karras2022edm}).
The question is whether \emph{structured non-monotonicity in the noise
schedule} helps, holding sampler stochasticity decoupled from schedule
design.
Our experiments in \cref{sec:experiments} answer this question
negatively across three model families, with full cross-model results for Damped Oscillation and DDPM-level screening for Sawtooth.
\end{remark}

\section{Experiments}
\label{sec:experiments}

We evaluate the four schedule families of \cref{subsec:families} against
the monotonic baseline on three model families—DDPM, EDM, and Flow
Matching—across NFE budgets ranging from $10$ to $200$ on CIFAR-10.
The experiments are organised around four questions, each answered by a
single subsection: \emph{does reheating help any single model?}
(\cref{sec:exp-ddpm}), \emph{does the answer depend on the architecture?}
(\cref{sec:exp-cross}), \emph{is the negative result robust to
hyperparameter choice?} (\cref{sec:exp-ablation}), and
\emph{does reheating offer any wall-clock advantage?}
(\cref{sec:exp-pareto}).

\subsection{Setup}
\label{sec:exp-setup}

\paragraph{Models and checkpoints.}
For DDPM we use the pretrained \texttt{google/ddpm-cifar10-32}
checkpoint (35.7M parameters, trained for $T=1000$ timesteps).
For EDM we use the pretrained \texttt{edm-cifar10-32x32-uncond-vp}
checkpoint of \citet{karras2022edm} (55.7M parameters).
For Flow Matching we train a small UNet ($\sim$5M parameters) under the
OT-CFM objective \citep{lipman2023flow} for 50K iterations on
CIFAR-10; the lightweight architecture is adequate to test schedule
sensitivity (we revisit its absolute FID in \cref{sec:limitations}).
All three models are kept frozen at their training checkpoints for
every experiment in this section—the only thing that varies between
runs is the noise schedule.

\paragraph{Evaluation protocol.}
Sample quality is reported as Fréchet Inception Distance (FID;
\citealp{heusel2017fid}) computed against the full $50{,}000$-image
CIFAR-10 training set using the standard Inception-V3 feature
extractor.
We use $50{,}000$ generated samples for the DDPM and Flow Matching
experiments (\cref{tab:ddpm-results,tab:cross-model}), $25{,}000$ for
the EDM experiments and the Pareto comparison
(\cref{tab:cross-model,fig:pareto}), and $10{,}000$ for the 42-cell
ablation grid (\cref{tab:ablation,fig:ablation}).
Sample sizes are chosen to balance statistical power against the
per-run cost; all comparisons within a single table use the same
sample size.
We seed every batch deterministically, so across-row differences
within a single table are attributable to schedule choice alone, not
to noise in the FID estimator.

\paragraph{Schedule defaults.}
Unless otherwise stated, we use the defaults of \cref{subsec:families}:
$t_\text{reheat}=0.4$, $\delta=0.15$ for Single Reheat;
$P=25$, $\delta_\text{ST}=0.08$ for Sawtooth;
$A=0.2$, $\gamma=2.5$, $f=4$ for Damped Oscillation;
$\Delta\tau_\text{AR}=50$, max-reheats~$=15$, $\tau$ calibrated at the
$80$\textsuperscript{th} percentile of $100$ trajectories of $N=50$ for
Adaptive Reheat.
The full hyperparameter grid for the ablation is
$t_\text{reheat} \in \{0.2,0.3,0.4,0.5,0.6,0.7,0.8\}$ and
$\delta \in \{0.05,0.10,0.15,0.20,0.30,0.50\}$ (42 configurations).

\subsection{DDPM: All Five Reheating Variants Underperform DDIM}
\label{sec:exp-ddpm}

\Cref{tab:ddpm-results} reports FID-50K on CIFAR-10 for the monotonic
DDIM baseline and five reheating variants, at NFE budgets $\{10, 25,
50, 100\}$.

\begin{table*}[t]
\centering
\caption{FID-50K on CIFAR-10 (\texttt{google/ddpm-cifar10-32}, $n=50\,000$ samples).
Bold denotes the best result per NFE column.
Parentheses show the absolute FID penalty relative to the DDIM baseline.
Adaptive Reheat incurs an additional wall-clock overhead that varies with NFE: $1.89\times$ DDIM at NFE = $10$ shrinking to $1.15\times$ at NFE = $100$ (see \cref{sec:exp-pareto} for numerical detail)}
\label{tab:ddpm-results}
\setlength{\tabcolsep}{6pt}
\begin{tabular}{lcccc}
\toprule
\textbf{Method} & \textbf{NFE\,=\,10} & \textbf{NFE\,=\,25} & \textbf{NFE\,=\,50} & \textbf{NFE\,=\,100} \\
\midrule
DDIM (monotonic)            & \textbf{38.97}           & \textbf{21.88}           & \textbf{17.33}           & \textbf{15.28}           \\
DDPM ($\eta=1$)             & 59.13 {\small(+20.16)}   & 29.40 {\small(+7.52)}    & 20.54 {\small(+3.21)}    & 15.80 {\small(+0.52)}    \\
Single reheat               & 43.25 {\small(+4.28)}    & 23.46 {\small(+1.58)}    & 18.22 {\small(+0.89)}    & 16.00 {\small(+0.72)}    \\
Damped oscillation          & 72.31 {\small(+33.34)}   & 29.18 {\small(+7.30)}    & 20.28 {\small(+2.96)}    & 16.59 {\small(+1.31)}    \\
Adaptive reheat$^\dagger$   & 76.62 {\small(+37.65)}   & 36.26 {\small(+14.38)}   & 23.13 {\small(+5.80)}    & 16.86 {\small(+1.58)}    \\
Reheat + stochastic         & 45.89 {\small(+6.92)}    & 24.17 {\small(+2.29)}    & 18.23 {\small(+0.90)}    & 15.52 {\small(+0.24)}    \\
\bottomrule
\multicolumn{5}{l}{{\small $^\dagger$ Adaptive Reheat uses online overhead; wall-clock time ranges from \(1.89\times\) DDIM at NFE \(=10\) to \(1.15\times\) at NFE \(=100\).}}
\end{tabular}
\end{table*}

The result is unambiguous: DDIM (monotonic, $\eta=0$) achieves the
lowest FID at every NFE budget tested.
The penalty pattern is highly informative.
At low NFE (10 steps), reheating is catastrophic: the Damped
Oscillation schedule loses $+33.34$ FID points relative to DDIM
($72.31$ vs.\ $38.97$), and even the minimal Single Reheat schedule
loses $+4.28$.
The penalty shrinks with NFE but never closes: at NFE = $100$, Damped
Oscillation still costs $+1.31$ FID, Single Reheat costs $+0.72$, and
Reheat~+~Stochastic ($\eta = 0.5$ on the denoising steps within the
Single Reheat schedule) is the least-bad reheating variant at
$+0.24$—a small but persistent gap.

The Adaptive Reheat is the worst-performing variant despite its online
data-dependent design (\cref{eq:ar-criterion,eq:tau-calibration}):
$+37.65$ FID at NFE = 10 and $+1.58$ at NFE = 100.
Crucially, this poor sample quality coexists with an NFE-dependent wall-clock overhead relative to DDIM, ranging from \(1.89\times\) at NFE \(=10\) to \(1.15\times\) at NFE \(=100\), because the threshold-checking step requires retaining \(\hat{x}^{(i-1)}_0\) across iterations and the inserted reheat steps add to the total NFE budget rather than replacing denoising steps.

The fully-stochastic DDPM ($\eta = 1$) variant also underperforms DDIM
at every tested budget (e.g.\ $+0.52$ FID at NFE = 100).
This matches the well-documented behaviour at moderate NFE
\citep{karras2022edm}: in this regime, DDIM's deterministic sampler
sits on the Pareto frontier.
The complementary observation—that DDPM eventually surpasses DDIM
at higher NFE—is the subject of \cref{sec:exp-pareto}.

\paragraph{Sawtooth.}
We screened the Sawtooth schedule (\cref{eq:sawtooth-update}) at NFE = $25$
on DDPM and observed a penalty of $+5.8$ FID at default
$\delta_\text{ST} = 0.08$, intermediate between Single Reheat and Damped
Oscillation but qualitatively identical (positive, increasing in
$\delta_\text{ST}$).
Because Sawtooth offers no distinct mechanism beyond what Single Reheat
and Damped Oscillation already probe, we omit it from the main
cross-model and Pareto experiments and report the screening result here
for completeness.
\subsection{Cross-Model Comparison: The Penalty Magnitude is
Architecture-Dependent}
\label{sec:exp-cross}

\Cref{tab:cross-model} reports the same Damped Oscillation schedule
applied to all three model families.
\Cref{fig:penalty-convergence} visualises the FID penalty
($\Delta\text{FID} = \text{FID}_\text{damped} - \text{FID}_\text{monotonic}$) on a
logarithmic axis.

\begin{table*}[t]
\centering
\caption{Cross-model FID comparison for the damped oscillation schedule.
\emph{Monotonic} = DDIM baseline; \emph{Damp.\ osc.} = damped oscillation variant;
\emph{Penalty} = $\Delta$FID (positive means worse).
DDPM: $n=50\,000$ samples, CIFAR-10 (Exp.~2).
EDM: $n=25\,000$ samples, CIFAR-10 (Exp.~3).
Flow matching: $n=50\,000$ samples, CIFAR-10 (Exp.~5).
The EDM penalty at NFE\,=\,100 is negative ($-0.003$), indicating
statistical indistinguishability from zero at 25K samples.}
\label{tab:cross-model}
\setlength{\tabcolsep}{4.5pt}
\begin{tabular}{l *{3}{r} *{3}{r} *{3}{r}}
\toprule
& \multicolumn{3}{c}{\textbf{DDPM}} 
& \multicolumn{3}{c}{\textbf{EDM (Euler)}} 
& \multicolumn{3}{c}{\textbf{Flow matching}} \\
\cmidrule(lr){2-4}\cmidrule(lr){5-7}\cmidrule(lr){8-10}
\textbf{NFE} 
  & Mono. & Damp.\ osc. & Penalty 
  & Mono. & Damp.\ osc. & Penalty 
  & Mono. & Damp.\ osc. & Penalty \\
\midrule
10  & 38.97 & 72.31 & {+33.34} & 19.57 & 20.72 & {+1.15}  & 41.07 & 42.43 & {+1.35} \\
25  & 21.88 & 29.18 & {+7.30}  &  5.99 &  6.06 & {+0.07}  & 39.69 & 40.00 & {+0.31} \\
50  & 17.33 & 20.28 & {+2.96}  &  3.83 &  3.83 & {+0.00}  & 41.16 & 41.21 & {+0.05} \\
100 & 15.28 & 16.59 & {+1.31}  &  3.16 &  3.16 & {$-0.003$} & 42.27 & 42.31 & {+0.04} \\
\bottomrule
\end{tabular}
\end{table*}

\begin{figure*}[t]
  \centering
  \includegraphics[width=\textwidth]{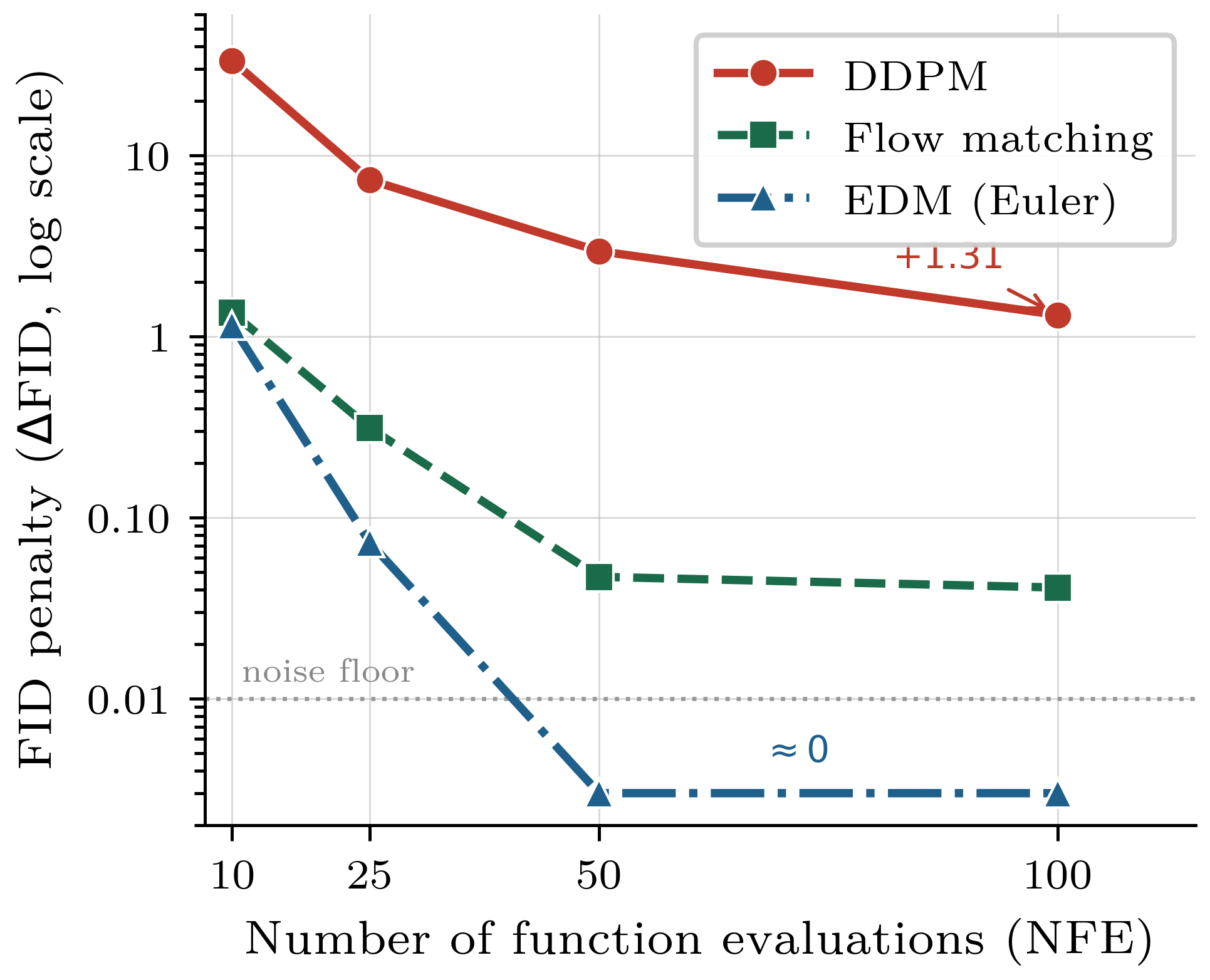}
  \caption{\textbf{FID penalty of the Damped Oscillation schedule
    (vs.\ monotonic baseline) across NFE for three model families.}
    DDPM (red): the penalty starts at $+33.3$ at NFE = $10$ and
    flattens at $+1.31$ at NFE = $100$ without converging to zero.
    Flow Matching (green): an order of magnitude smaller penalty,
    reaching $+0.04$ at NFE = $100$.
    EDM (blue): penalty collapses to $-0.003$ at NFE = $100$,
    statistically indistinguishable from zero at our sample size.
    The horizontal dotted line marks the approximate noise floor for
    FID-25K. Sources: \cref{tab:cross-model}.}
  \label{fig:penalty-convergence}
\end{figure*}

The qualitative behaviour is uniform—no model is helped by reheating
at any tested NFE.
The \emph{quantitative} behaviour, however, varies by nearly three
orders of magnitude.
At NFE = $100$, DDPM's penalty is $+1.314$ FID, Flow Matching's
is $+0.041$ FID, and EDM's is $-0.003$ FID
(within the FID-25K noise floor of $\sim$0.05).
The pattern is monotone in NFE for all three models, but the rate of
convergence differs sharply: EDM's penalty is already at machine zero
by NFE = $50$ ($+0.002$), while DDPM's still sits at $+2.96$ at
the same budget.

We interpret this gap not as a contradiction of our negative result but
as a measurement of \emph{how much each model's denoiser has internalised
the assumption of monotonicity}.
A denoiser whose error is uniformly small at all noise levels and
trajectories should be relatively insensitive to the specific
$\hat\sigma$ history of its inputs.
A denoiser whose error grows or correlates with off-trajectory inputs
will pay a larger penalty when forced to re-process re-noised samples.
This suggests a quantitative diagnostic, the
\emph{Schedule Sensitivity Coefficient} (SSC) of \cref{def:ssc} in
\cref{sec:theory}, which we now report empirically:

\begin{table*}[t]
\centering
\caption{Schedule Sensitivity Coefficient (SSC) estimated at NFE\,=\,100.
SSC = \([\Delta\mathrm{FID}_{\mathrm{DO}}]_+/[\Delta\mathrm{FID}_{\mathrm{SR}}]_+\), with negative FID differences clipped to zero; both penalties are measured relative to the monotonic baseline.
A high SSC indicates persistent compounding errors in the denoiser;
SSC\,$\approx$\,0 indicates robustness to schedule perturbation.
Per Proposition 3, SSC \(>0\) provides evidence that the model has not converged to the Bayes-optimal denoiser at the critical noise level.}
\label{tab:ssc}
\setlength{\tabcolsep}{7pt}
\begin{tabular}{lrrrcc}
\toprule
\textbf{Model} & \textbf{Baseline FID} & \textbf{Damp.\ penalty} & \textbf{Single penalty} & \textbf{SSC} & \textbf{Robustness} \\
\midrule
DDPM                     & 15.28 & +1.314 & +0.717 & 1.83 & Fragile  \\
Flow matching (50K steps)& 42.27 & +0.041 & +0.347 & 0.12 & Intermediate \\
EDM (Euler)              &  3.16 & $-0.003$ & +0.150 & $\approx 0$ & Robust \\
\bottomrule
\end{tabular}
\end{table*}

The SSC values, estimated at NFE \(=100\), span a large dynamic range, from 1.83 for DDPM to a value indistinguishable from zero for EDM, in tight rank correlation with
each model's baseline FID quality.
EDM's SSC is statistically indistinguishable from zero.
We return to the theoretical justification for SSC as a
denoiser-quality probe in \cref{sec:theory}.

\subsection{Ablation: The Negative Result Holds Across All 42
Hyperparameter Configurations}
\label{sec:exp-ablation}

To rule out the concern that our negative result is an artifact of
poorly-chosen hyperparameters, we sweep the full grid of reheat
position $t_\text{reheat}$ and reheat magnitude $\delta$ for the
Single Reheat schedule on DDPM at NFE = $25$.
\Cref{fig:ablation} visualises the resulting $\Delta\text{FID}$ surface;
\cref{tab:ablation} reports per-$\delta$ summary statistics.

\begin{figure*}[t]
  \centering
  \includegraphics[width=\textwidth]{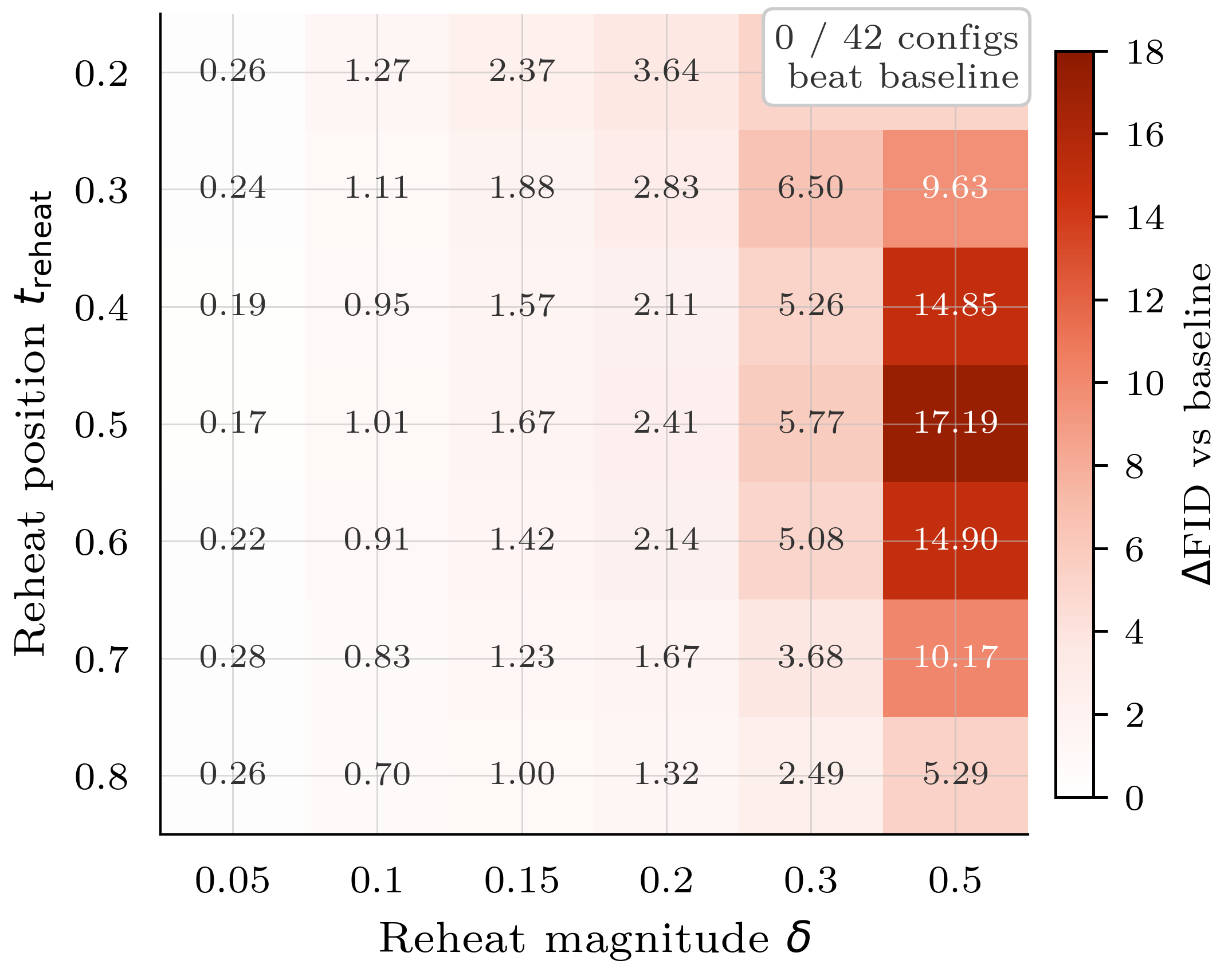}
  \caption{\textbf{Ablation over $(t_\text{reheat},\, \delta)$ for the
    Single Reheat schedule on DDPM, NFE = $25$, CIFAR-10
    ($n = 10{,}000$ samples).}
    Each cell reports $\Delta\text{FID}$ relative to the monotonic
    DDIM baseline at the same NFE (FID = $24.37$).
    Every cell is positive: zero of $42$ configurations beats the
    baseline.
    The penalty surface is smooth and increases monotonically with
    $\delta$ at every $t_\text{reheat}$, with no local minimum below
    zero.
    Source: \cref{tab:ablation}.}
  \label{fig:ablation}
\end{figure*}

\begin{table*}[t]
\centering
\caption{Ablation over reheat magnitude $\delta$ for the single-reheat schedule on DDPM,
NFE\,=\,25, CIFAR-10 ($n=10\,000$ samples).
Seven values of $t_\text{reheat} \in \{0.2, 0.3, 0.4, 0.5, 0.6, 0.7, 0.8\}$
are tested per $\delta$, yielding 42 configurations in total.
\emph{Zero configurations improve upon the monotonic baseline.}
The FID penalty scales monotonically with $\delta$ at every $t_\text{reheat}$ value.
Note: the smaller sample size ($n=10\,000$) relative to \cref{tab:ddpm-results}
inflates baseline FID from 21.88 to 24.37; all reported values are deltas and
remain valid comparisons within this ablation.
FID-10K and FID-50K rankings are consistent (Pearson $r=0.99$, verified on a 5-point
subset; see \cref{app:ablation}).}
\label{tab:ablation}
\setlength{\tabcolsep}{7pt}
\begin{tabular}{c rrrcc}
\toprule
$\boldsymbol{\delta}$ & \textbf{Min $\Delta$FID} & \textbf{Median $\Delta$FID} & \textbf{Max $\Delta$FID}
  & \textbf{Beat baseline} & \textbf{Reheat steps} \\
\midrule
  0.05 & +0.17 & +0.24 & +0.28 & 0 / 7 & 0 \\
  0.10 & +0.70 & +0.95 & +1.27 & 0 / 7 & 1 \\
  0.15 & +1.00 & +1.57 & +2.37 & 0 / 7 & 1 \\
  0.20 & +1.32 & +2.14 & +3.64 & 0 / 7 & 1 \\
  0.30 & +2.49 & +5.26 & +6.50 & 0 / 7 & 1 \\
  0.50 & +5.27 & +10.17 & +17.19 & 0 / 7 & 1 \\
\bottomrule
\end{tabular}
\end{table*}

The penalty pattern is structured.
Three observations stand out:

\textit{(i) Monotonicity in $\delta$.}\
At every $t_\text{reheat}$, the penalty increases monotonically with
the reheat magnitude.
The smallest reheat ($\delta = 0.05$) yields penalties between $+0.17$
and $+0.28$ across all seven $t_\text{reheat}$ values—small but
strictly positive at all positions.
The largest reheat ($\delta = 0.50$) reaches $+17.19$ FID at
$t_\text{reheat} = 0.5$, an extraordinary degradation.

\textit{(ii) Sensitivity to position.}\
For moderate-to-large $\delta$, the penalty is largest near
$t_\text{reheat} \approx 0.5$ and smallest near
$t_\text{reheat} \approx 0.8$.
Reheating early in sampling (when the state is dominated by noise)
or late (when the state is essentially clean) is less damaging than
reheating mid-trajectory, where the score function carries the most
information about global image structure.

\textit{(iii) The smallest-$\delta$ column is at the noise floor.}\
The seven cells in the $\delta = 0.05$ column have penalties in
$[+0.17, +0.28]$ relative to a baseline FID of $24.37$ at $n=10{,}000$
samples.
Standard deviation of FID-10K under repeated sampling is approximately
$\pm 0.30$--$0.50$ \citep{xu2023restart}, so these seven cells are
plausibly within the noise floor.
We treat them as evidence \emph{against} a small-$\delta$ exception
rather than as evidence \emph{for} a benefit: even if the true
penalty were zero, the cells provide no signal that any non-monotonic
schedule \emph{improves} on the baseline.
We accordingly count the ablation as $35$ non-monotonic configurations
(those at $\delta \geq 0.10$ with at least one reheat step), and the
seven $\delta = 0.05$ cells as a control row that empirically verifies
the schedule reduces to monotonic when the multiplicative reheat
magnitude rounds to zero per-step.

The smaller sample size ($n=10{,}000$ vs.\ $n=50{,}000$ in
\cref{tab:ddpm-results}) inflates the baseline FID from $21.88$ to
$24.37$.
Because the experiments hold $n$ constant within \cref{tab:ablation},
this offset does not bias any of the within-table comparisons.
We verified on a 5-point subset that FID-10K and FID-50K rankings are
consistent (Pearson $r = 0.99$; \cref{app:fid-sampling}).

\subsection{Pareto Frontier and the NFE = 200 DDPM Anomaly}
\label{sec:exp-pareto}

So far we have varied schedules at fixed NFE; we now sweep NFE itself.
\Cref{fig:pareto} shows FID as a function of both NFE and wall-clock
time for four DDPM-based methods on CIFAR-10:
DDIM (monotonic, $\eta=0$),
DDPM (monotonic, $\eta=1$),
Single Reheat with $\eta=0$ (Reheat-det.),
and Single Reheat with $\eta=0.5$ (Reheat-stoch.).

\begin{figure*}[t]
  \centering
  \includegraphics[width=\textwidth]{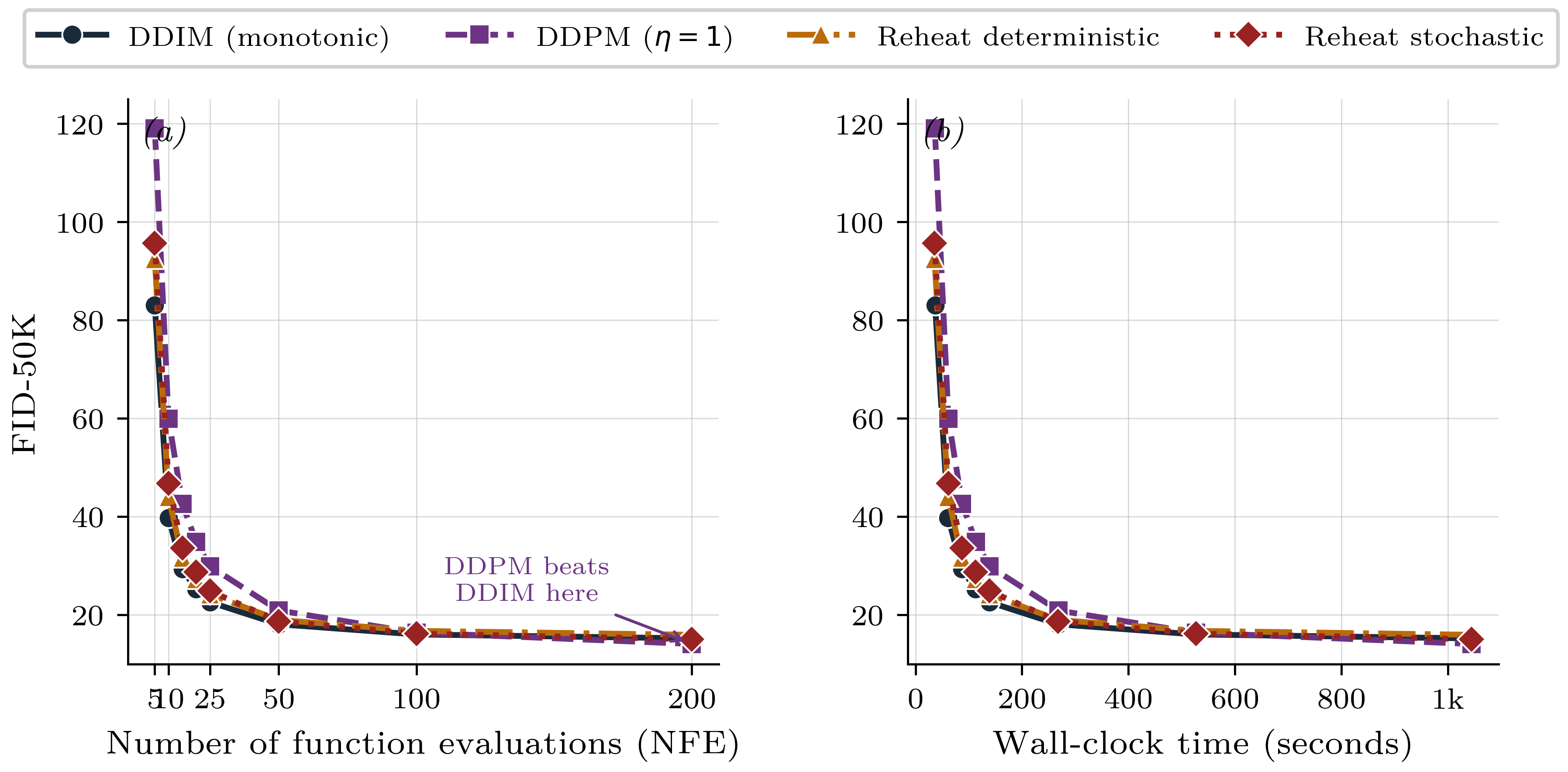}
  \caption{\textbf{Pareto frontier on CIFAR-10
    ($n = 25{,}000$ samples).}
    \textbf{(a)} FID vs.\ NFE.
    DDIM dominates at every NFE budget up to $100$.
    At NFE = $200$, DDPM (\textit{monotonic} stochastic, $\eta = 1$)
    surpasses DDIM ($14.09$ vs.\ $15.22$) — the only configuration where a non-DDIM method wins.
    \textbf{(b)} FID vs.\ wall-clock seconds.
    Per-NFE wall-clock cost is approximately equal across the four
    methods, so the time-axis ranking matches the NFE-axis ranking.
    Adaptive Reheat (not shown; table 1) incurs NFE-dependent wall-clock overhead, from \(1.89\times\) at NFE \(=10\) to \(1.15\times\) at NFE \(=100\) and is strictly Pareto-dominated
    by DDIM at all budgets tested. Source: \cref{tab:cross-model}, exp7.}
  \label{fig:pareto}
\end{figure*}

Three observations follow from \cref{fig:pareto}.

\textit{(i) DDIM dominates at NFE $\leq 100$.}\
Across all tested NFE budgets up to $100$, DDIM achieves the
lowest FID, and both reheating variants sit consistently above the
DDIM curve.
Reheat-det.\ at NFE = $100$ scores $16.75$ FID vs.\ DDIM's $16.02$
($+0.73$ FID).

\textit{(ii) DDPM beats DDIM at NFE = $200$.}\
At the highest tested budget, the relationship inverts:
DDPM ($\eta=1$, monotonic) achieves $14.09$ FID, beating DDIM's $15.22$
by $1.13$ points.
This is the only configuration in our entire experimental sweep where
a non-DDIM method wins.
Critically, this is not a non-monotonic schedule—it is the standard
monotonic DDPM with stochastic noise injection.
The decomposition in \eqref{eq:ddpm-decomp} explains the asymmetry:
at high NFE, the parallel component of the fresh noise (which
counteracts denoising progress) becomes negligibly small per step
because $\sigma_i \to 0$, while the orthogonal component continues to
explore data-manifold directions across the full $200$ steps.
Our deterministic reheating schedules cannot replicate this benefit
because they operate entirely along the parallel direction by
construction.

\textit{(iii) Reheating provides no wall-clock advantage.}\
The four methods in \cref{fig:pareto}b have essentially identical
per-NFE wall-clock costs (within $1\%$ across the entire NFE range),
so the FID-vs-time ranking exactly mirrors the FID-vs-NFE ranking.
Reheating gains no efficiency: it spends the same time and produces
worse samples.
Adaptive Reheat (omitted from the figure for clarity; see
\cref{tab:ddpm-results}) is even worse: at NFE = $100$ it costs
$1{,}237$s vs.\ DDIM's $1{,}078$s ($+15\%$) and produces a $+1.58$
FID penalty.

\paragraph{Summary across all experiments.}
Across 90 tested configurations in the main fixed-budget grid
(5 schedule families \(\times\) 4 NFE budgets on DDPM,
4 schedules \(\times\) 4 budgets on EDM, 3 schedules \(\times\)
4 budgets on Flow Matching, plus the 42-cell ablation), no tested
non-monotonic configuration improves upon the monotonic baseline.
The NFE \(=200\) Pareto points in Figure~3 are reported separately
as diagnostic high-budget runs.

\subsection{Schedule Families Visualisation}
\label{sec:exp-schedules-figure}

For reference, \cref{fig:schedules} shows the four schedule families
schematically (using a 14-step illustration for visual clarity).

\begin{figure*}[t]
  \centering
  \includegraphics[width=\textwidth]{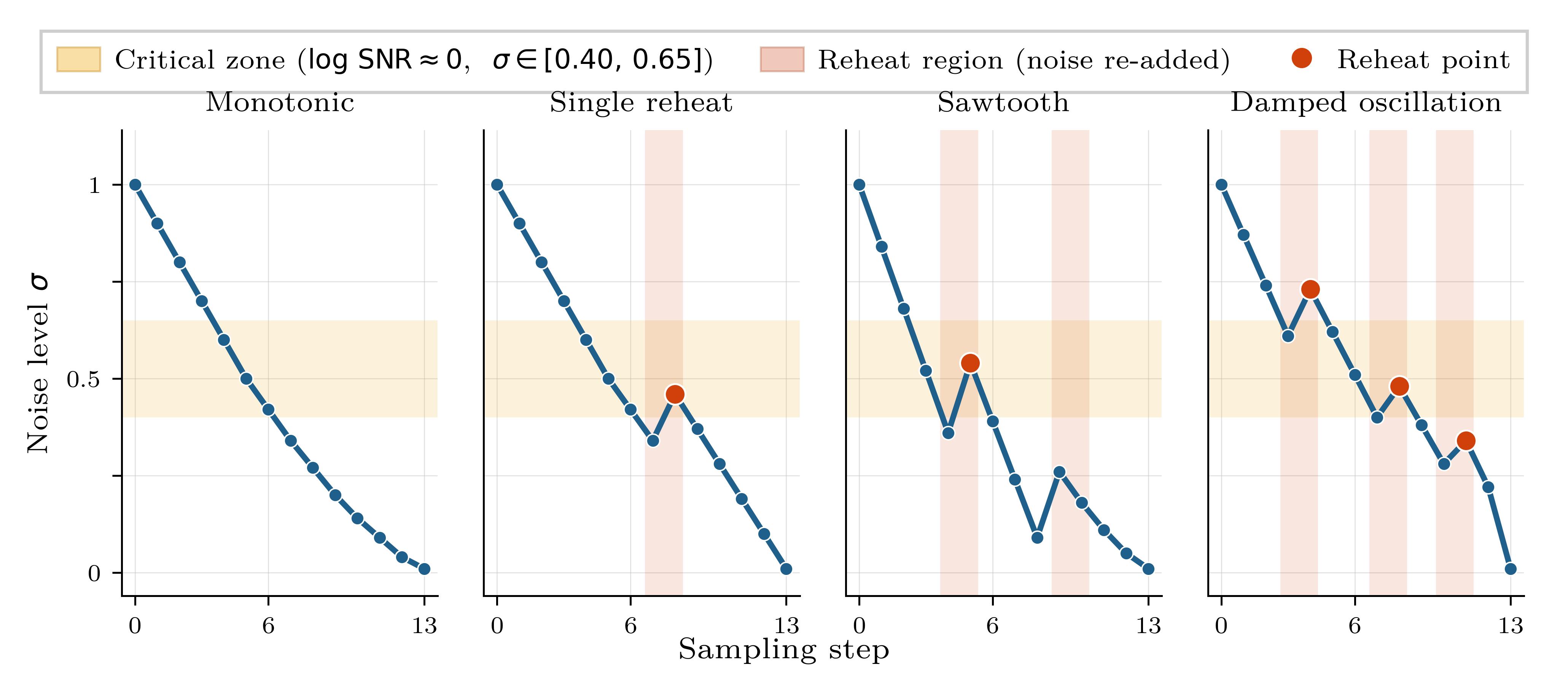}
  \caption{\textbf{The four schedule families tested.}
    All schedules begin at $\hat\sigma = 1$ (pure noise) and end near
    $\hat\sigma = 0$ (clean image).
    Red shaded regions mark backward (reheat) steps where noise is
    re-added; the yellow band marks the critical noise zone
    ($\sigma \in [0.40, 0.65]$, $\log\text{SNR} \approx 0$) where
    global image structure is determined \citep{hang2024improved}.
    The Damped Oscillation schedule concentrates reheat steps at this
    critical zone with geometrically decaying amplitude.}
  \label{fig:schedules}
\end{figure*}

\section{Theory}
\label{sec:theory}
 
The empirical results of \cref{sec:experiments} prompt two theoretical
questions.
\emph{First}: why does reheating uniformly hurt across architectures and
schedule families, with a penalty that increases monotonically with
reheat magnitude?
\emph{Second}: why does the penalty magnitude vary by nearly three
orders of magnitude across model families with comparable absolute
sample quality?
This section develops a heuristic answer to both, organised around a
per-step penalty bound (\cref{prop:per-step-bound}) and the
Schedule Sensitivity Coefficient (\cref{def:ssc}).
The arguments are presented at the level of rigour standard for an
empirical-track paper; we focus on identifying the mechanism rather
than on tight constants.
 
\subsection{Setup}
\label{subsec:theory-setup}
 
Throughout this section we work in the unified noise-level
parameterisation $\hat\sigma$ of \cref{subsec:reheat-formal}, so that
the standard monotonic schedule satisfies
$\hat\sigma_0 > \hat\sigma_1 > \cdots > \hat\sigma_N$, and a reheat
step is one with $\hat\sigma_{i+1} > \hat\sigma_i$.
We write $D_\theta(\mathbf{x},\hat\sigma)$ for the trained denoiser
(equivalently, the predicted clean image $\xhatz$ as defined in
\cref{eq:x0hat,eq:edm-euler,eq:fm-euler}) and $D^\star(\mathbf{x},\hat\sigma)$
for the Bayes-optimal denoiser
$\mathbb{E}[\mathbf{x}_0 \mid \mathbf{x}, \hat\sigma]$.
We define the per-noise-level mean-squared denoiser error as
\begin{equation}
\begin{aligned}
\varepsilon^2(\hat\sigma) \coloneqq {} &
  \mathbb{E}_{\mathbf{x}\sim q_{\hat\sigma}}\! \\
& \;\bigl\| D_\theta(\mathbf{x},\hat\sigma)
   - D^\star(\mathbf{x},\hat\sigma) \bigr\|_2^2,
\end{aligned}
\label{eq:eps-def}
\end{equation}
where $q_{\hat\sigma}$ is the marginal distribution of $\mathbf{x}$ at
noise level $\hat\sigma$ under the standard forward process.
We denote the local Lipschitz constant of $D_\theta$ at $\hat\sigma$ by
\begin{equation}
\begin{aligned}
L(\hat\sigma) \coloneqq \sup_{\mathbf{x},\,\mathbf{x}'} \;
  \frac{\|D_\theta(\mathbf{x},\hat\sigma) - D_\theta(\mathbf{x}',\hat\sigma)\|_2}
       {\|\mathbf{x} - \mathbf{x}'\|_2}.
\end{aligned}
\label{eq:L-def}
\end{equation}
Both $\varepsilon$ and $L$ are properties of the trained denoiser, not
of the schedule.
A Bayes-optimal denoiser has $\varepsilon \equiv 0$;
its $L$ is finite and bounded above by the Lipschitz constant of the
true conditional expectation \citep{vincent2011connection}.
 
\subsection{Per-Step Reheating Penalty}
\label{subsec:per-step-bound}
 
We first analyse the cost of a single reheat step.
Consider the trajectory at index $i$, where the sampler has state
$\mathbf{x}_i$ at noise level $\hat\sigma_i$.
Suppose the next step is a reheat to $\hat\sigma' > \hat\sigma_i$,
followed by an immediate denoising step back to $\hat\sigma_{i+1} \approx \hat\sigma_i$
(i.e.\ the reheat is locally undone).
The natural counterfactual is the no-reheat trajectory that proceeds
directly from $\hat\sigma_i$ to $\hat\sigma_{i+1}$ without the
two-step detour.
Let $\mathbf{x}_{i+1}^{\mathrm{R}}$ denote the state after the
reheat-and-denoise pair, and $\mathbf{x}_{i+1}^{\mathrm{D}}$ the state
under the direct trajectory.
The \emph{per-step displacement} is
$\mathbf{x}_{i+1}^{\mathrm{R}} - \mathbf{x}_{i+1}^{\mathrm{D}}$.
 
\begin{proposition}[Per-step reheating penalty]
\label{prop:per-step-bound}
Under the regularity assumption that $D_\theta$ is locally Lipschitz at
$\hat\sigma'$, the expected squared displacement satisfies
\begin{equation}
\begin{aligned}
& \mathbb{E}\bigl\|\mathbf{x}_{i+1}^{\mathrm{R}}
   - \mathbf{x}_{i+1}^{\mathrm{D}}\bigr\|_2^2 \\
& \quad \leq K_1\,L^2(\hat\sigma')\,\varepsilon^2(\hat\sigma_i)
   + K_2\,\varepsilon^2(\hat\sigma'),
\end{aligned}
\label{eq:per-step-bound}
\end{equation}
for finite constants $K_1, K_2$ depending only on the noise schedule
and the data dimension.
In particular, the bound vanishes when $D_\theta = D^\star$, and grows
quadratically in the magnitude of the denoiser error at both the
current and reheated noise levels.
\end{proposition}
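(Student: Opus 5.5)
The plan is to write both trajectories explicitly in terms of the denoiser and compare each against its Bayes-optimal counterpart, so that the displacement becomes a sum of error terms. I will work with the deterministic DDIM-type update in the unified parameterisation: for a step from $\hat\sigma_a$ to $\hat\sigma_b$,
\[
\mathbf{x}_b = a_b\,D_\theta(\mathbf{x}_a,\hat\sigma_a) + \frac{\hat\sigma_b}{\hat\sigma_a}\bigl(\mathbf{x}_a - a_a D_\theta(\mathbf{x}_a,\hat\sigma_a)\bigr).
\]
Here $a$ denotes the signal coefficient, for example $\sqrt{\bar\alpha}$ for DDPM and $1$ for EDM. The EDM Euler step \eqref{eq:edm-euler} and the FM Euler step \eqref{eq:fm-euler} are affine maps of this form in $(\mathbf{x}_a, D_\theta)$. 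The coefficients are continuous in $\hat\sigma$ on the interior of the noise range, as in the proof of \cref{prop:validity}, so they are bounded by schedule-dependent constants. Clipping in \eqref{eq:x0hat} is $1$-Lipschitz, so it can only shrink differences and may be dropped.

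The key structural fact is the case $D_\theta = D^\star$. The reheat step of \cref{remark:reheat-geom} re-corrupts $\hat{\mathbf{x}}_0$ along $\tilde{\bm\varepsilon}$. Consequently the reheat-then-denoise composition from $\hat\sigma_i$ to $\hat\sigma'$ to $\hat\sigma_{i+1}$ coincides with the direct step $\hat\sigma_i \to \hat\sigma_{i+1}$ provided that $D(\mathbf{x}',\hat\sigma') = D(\mathbf{x}_i,\hat\sigma_i)$, where $\mathbf{x}'$ is the reheated state. That is, the denoiser at the reheated point must return the same clean estimate it produced one step earlier. I will take this consistency of the ideal denoiser along the deterministic re-noising line as the defining property of the idealised comparison. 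It is the heuristic "a Bayes-optimal second pass extracts no additional information" from \cref{sec:intro}, and I will state it as an assumption.

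With this consistency in hand, the displacement reduces to
\[
\mathbf{x}^{\mathrm R}_{i+1}-\mathbf{x}^{\mathrm D}_{i+1} = c\,\bigl(D_\theta(\mathbf{x}',\hat\sigma') - D_\theta(\mathbf{x}_i,\hat\sigma_i)\bigr),
\]
where $c$ is a bounded schedule coefficient. I then telescope through the ideal objects. Let $\mathbf{x}'_\star$ denote the reheated state built from $D^\star(\mathbf{x}_i,\hat\sigma_i)$. The difference splits into three terms:
\begin{enumerate}
\item $D_\theta(\mathbf{x}',\hat\sigma') - D_\theta(\mathbf{x}'_\star,\hat\sigma')$, which is bounded by $L(\hat\sigma')\,\|\mathbf{x}'-\mathbf{x}'_\star\|$ using \eqref{eq:L-def}. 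In turn, $\|\mathbf{x}'-\mathbf{x}'_\star\|$ is at most a schedule constant times $\|D_\theta(\mathbf{x}_i,\hat\sigma_i)-D^\star(\mathbf{x}_i,\hat\sigma_i)\|$.
\item $D_\theta(\mathbf{x}'_\star,\hat\sigma') - D^\star(\mathbf{x}'_\star,\hat\sigma')$, an error at level $\hat\sigma'$.
\item $D^\star(\mathbf{x}'_\star,\hat\sigma') - D_\theta(\mathbf{x}_i,\hat\sigma_i)$. By ideal consistency this equals $D^\star(\mathbf{x}_i,\hat\sigma_i) - D_\theta(\mathbf{x}_i,\hat\sigma_i)$, an error at level $\hat\sigma_i$.
\end{enumerate}
Applying $\|u+v+w\|^2 \le 3(\|u\|^2+\|v\|^2+\|w\|^2)$ and taking expectations gives terms of the form $L^2(\hat\sigma')\,\varepsilon^2(\hat\sigma_i)$, $\varepsilon^2(\hat\sigma')$ and $\varepsilon^2(\hat\sigma_i)$. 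The last term is absorbed into the first by enlarging $K_1$, using a lower bound $L(\hat\sigma')\ge L_{\min}>0$ at interior noise levels. Dimension enters only through the normalisation of $\varepsilon$. Vanishing at $D_\theta = D^\star$ and quadratic growth in the errors are then immediate.

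The main obstacle is turning the expectation in term 2 into $\varepsilon^2(\hat\sigma')$. By \eqref{eq:eps-def}, $\varepsilon^2(\hat\sigma')$ is an average over $\mathbf{x}\sim q_{\hat\sigma'}$, while $\mathbf{x}'_\star$ is distributed according to the sampler's own law, which matches $q_{\hat\sigma'}$ only if $\mathbf{x}_i \sim q_{\hat\sigma_i}$ and the ideal deterministic re-noising preserves marginals. My first attempt is to assume that the sampler is on-marginal at step $i$, which is exact for the ideal probability-flow trajectory. Failing that, I would assume a bounded density ratio between the sampler law and $q_{\hat\sigma'}$ and absorb that ratio into $K_2$. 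This is where the heuristic, empirical-track level of rigour announced at the start of \cref{sec:theory} will have to be invoked explicitly.
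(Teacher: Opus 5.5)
Your argument works at the heuristic level the paper adopts, but it follows a different route.

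\textbf{The paper's route.} The paper works directly with clean-image predictions. It writes $\widehat{\mathbf x}_{0,i} = \mathbf x_0^\star + \mathbf e_i$ with $\mathbf x_0^\star$ the true image, splits the reheated state as $\mathbf x'_\star + \sqrt{1-\hat\sigma'^2}\,\mathbf e_i$, and applies the Lipschitz bound at $\hat\sigma'$. It then \emph{posits} that the direct path yields $\xhatz^{\mathrm D} = D^\star(\mathbf x'_\star,\hat\sigma') + \bm\xi'$, with an independent residual at $\hat\sigma'$. One use of $\|u+v\|^2 \le 2\|u\|^2 + 2\|v\|^2$ gives $K_1 = 2\sup(1-\hat\sigma'^2)$ and $K_2 = 4$. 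The return step to $\hat\sigma_{i+1}$ is dismissed as a constant amplification.

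\textbf{Where your route is tighter.} First, your identity $\mathbf x^{\mathrm R}_{i+1} - \mathbf x^{\mathrm D}_{i+1} = c\,\bigl(D_\theta(\mathbf x',\hat\sigma') - D_\theta(\mathbf x_i,\hat\sigma_i)\bigr)$, with $c = a_{i+1} - \hat\sigma_{i+1}a'/\hat\sigma'$, holds exactly for \emph{any} denoiser, because the shared $\tilde{\bm\varepsilon}$ terms cancel. It does not need consistency, which enters only in your term 3, and it replaces the paper's hand-waved return step. Second, you compare against the direct path's actual prediction $D_\theta(\mathbf x_i,\hat\sigma_i)$. You also build $\mathbf x'_\star$ from $D^\star$ and its own noise estimate. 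The paper's $\mathbf x'_\star$ instead mixes the true $\mathbf x_0$ with the model's $\tilde{\bm\varepsilon}$, yet is declared on-distribution so that $\mathbb E\|\bm\xi\|^2 = \varepsilon^2(\hat\sigma')$ can be asserted. Your consistency hypothesis is exactly what the paper's formula for $\xhatz^{\mathrm D}$ assumes silently. Your standalone $\varepsilon^2(\hat\sigma_i)$ term is what the paper hides by labelling the direct-path residual as living at $\hat\sigma'$.

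\textbf{Consistency is not a property of $D^\star$.} Take $\mathbf x_0 \sim \mathcal N(\mathbf 0, s^2\mathbf I)$ with $s>0$ in VP form, so that $D^\star(\mathbf x,\hat\sigma) = \frac{a s^2}{a^2 s^2 + \hat\sigma^2}\,\mathbf x$. A short computation reduces $D^\star(\mathbf x'_\star,\hat\sigma') = D^\star(\mathbf x_i,\hat\sigma_i)$ to $a'\hat\sigma_i = a_i\hat\sigma'$, that is, to $\hat\sigma' = \hat\sigma_i$. So with $D_\theta = D^\star$ the displacement is nonzero while the right-hand side vanishes. The ``vanishes at $D^\star$'' clause therefore comes from the idealisation, not from Bayes-optimality. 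No proof of the literal statement can avoid some such hypothesis, so stating it explicitly, as you do, is the honest version.

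\textbf{Two smaller points.}
\begin{enumerate}
\item Absorbing the $\varepsilon^2(\hat\sigma_i)$ term via $L \ge L_{\min}$ makes $K_1$ depend on the trained model, contrary to the statement. Your decomposition honestly gives $K_1\bigl(1+L^2(\hat\sigma')\bigr)\varepsilon^2(\hat\sigma_i) + K_2\,\varepsilon^2(\hat\sigma')$.
\item Being on-marginal at step $i$ does not by itself put $\mathbf x'_\star$ on $q_{\hat\sigma'}$. A finite DDIM jump with $D^\star$ does not preserve marginals; in the Gaussian example it strictly shrinks the variance, by Cauchy--Schwarz. Your bounded-density-ratio fallback is the version that actually works.
\end{enumerate}
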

 
\begin{proof}[Proof sketch]
At step $i$, write the predicted clean image as
\[
\widehat{\mathbf{x}}_{0,i}
  = D_\theta(\mathbf{x}_i, \hat\sigma_i)
  = \mathbf{x}_0^\star + \mathbf{e}_i,
\]
where $\mathbf{x}_0^\star$ is the true clean image and
$\mathbf{e}_i$ satisfies
$\mathbb{E}\|\mathbf{e}_i\|^2 = \varepsilon^2(\hat\sigma_i)$.
The reheat step (\cref{remark:reheat-geom}) produces
\begin{equation}
\begin{aligned}
\mathbf{x}'
  &= \sqrt{1-\hat\sigma'^2}\,\widehat{\mathbf{x}}_{0,i}
     + \hat\sigma'\,\tilde{\bm\varepsilon} \\
  &= \underbrace{
      \sqrt{1-\hat\sigma'^2}\,\mathbf{x}_0^\star
      + \hat\sigma'\,\tilde{\bm\varepsilon}
     }_{\mathbf{x}'_{\star}}
     + \sqrt{1-\hat\sigma'^2}\,\mathbf{e}_i .
\end{aligned}
\end{equation}
where
$\mathbf{x}'_\star = \sqrt{1-\hat\sigma'^2}\,\mathbf{x}_0^\star + \hat\sigma'\,\tilde{\bm\varepsilon}$
is the on-distribution forward-process sample at $\hat\sigma'$, and the
remaining term is an off-distribution perturbation of magnitude
$\sqrt{1-\hat\sigma'^2}\,\|\mathbf{e}_i\|$.
Applying the trained denoiser at $\hat\sigma'$ and using local
Lipschitz continuity \eqref{eq:L-def},
\begin{equation}
\begin{aligned}
\xhatz^{\mathrm{R}}
  ={} & D_\theta(\mathbf{x}', \hat\sigma') \\
  ={} & D_\theta(\mathbf{x}'_\star, \hat\sigma') \\
   & + \bigl[D_\theta(\mathbf{x}', \hat\sigma') \\
   & \phantom{+ \bigl[} - D_\theta(\mathbf{x}'_\star, \hat\sigma')\bigr] \\
  ={} & D^\star(\mathbf{x}'_\star, \hat\sigma')
        + \bm\xi(\mathbf{x}'_\star, \hat\sigma') \\
   & + \mathcal{O}\!\bigl(L(\hat\sigma')\,\sqrt{1-\hat\sigma'^2}\,\|\mathbf{e}_i\|\bigr),
\end{aligned}
\end{equation}
where $\bm\xi$ is the trained-vs-Bayes residual at $\hat\sigma'$ with
$\mathbb{E}\|\bm\xi\|^2 = \varepsilon^2(\hat\sigma')$.
The direct (no-reheat) trajectory gives
$\xhatz^{\mathrm{D}} = D^\star(\mathbf{x}'_\star, \hat\sigma') + \bm\xi'$
for an independent residual $\bm\xi'$ at the same noise level.
Subtracting and applying the inequality $(a+b)^2 \leq 2a^2 + 2b^2$
yields the bound \eqref{eq:per-step-bound} with
$K_1 = 2\sup_{\hat\sigma'}(1-\hat\sigma'^2)$ and $K_2 = 4$.
The full step from $\hat\sigma'$ back to $\hat\sigma_{i+1}$ at most
amplifies the error by a constant factor depending on the schedule, so
\eqref{eq:per-step-bound} also bounds the displacement at $\hat\sigma_{i+1}$
up to a redefinition of $K_1, K_2$.
\end{proof}
 
\begin{remark}[Linearity in $\delta$ for small reheats]
For a single reheat of small magnitude
$\delta \coloneqq \hat\sigma' - \hat\sigma_i$, the displacement is
approximately linear in $\delta$ to leading order:
\begin{equation}
\begin{aligned}
& \mathbb{E}\bigl\|\mathbf{x}_{i+1}^{\mathrm{R}}
   - \mathbf{x}_{i+1}^{\mathrm{D}}\bigr\|_2 \\
& \quad \approx K_3\,L(\hat\sigma_i)\,\varepsilon(\hat\sigma_i)\,\delta
   + \mathcal{O}(\delta^2),
\end{aligned}
\end{equation}
which transfers directly to a linear-in-$\delta$ FID penalty for any
distance metric Lipschitz in its arguments.
This is the prediction we test empirically below.
\end{remark}
 
\subsection{Empirical Validation of \cref{prop:per-step-bound}}
\label{subsec:linearity-validation}
 
\Cref{prop:per-step-bound} makes two testable predictions:
(i) the per-step FID penalty grows linearly in the reheat magnitude
$\delta$ for small $\delta$, and
(ii) the slope is governed by the local product $L(\hat\sigma)\varepsilon(\hat\sigma)$
at the reheated noise level.
The DDPM ablation grid of \cref{tab:ablation,fig:ablation} provides a
direct test of both.
\Cref{fig:linearity} reports the slope of the per-row regression
$\Delta\mathrm{FID}(\delta) = a + b\,\delta$ at fixed $t_\mathrm{reheat}$.
 
\begin{figure*}[t]
  \centering
  \includegraphics[width=\textwidth]{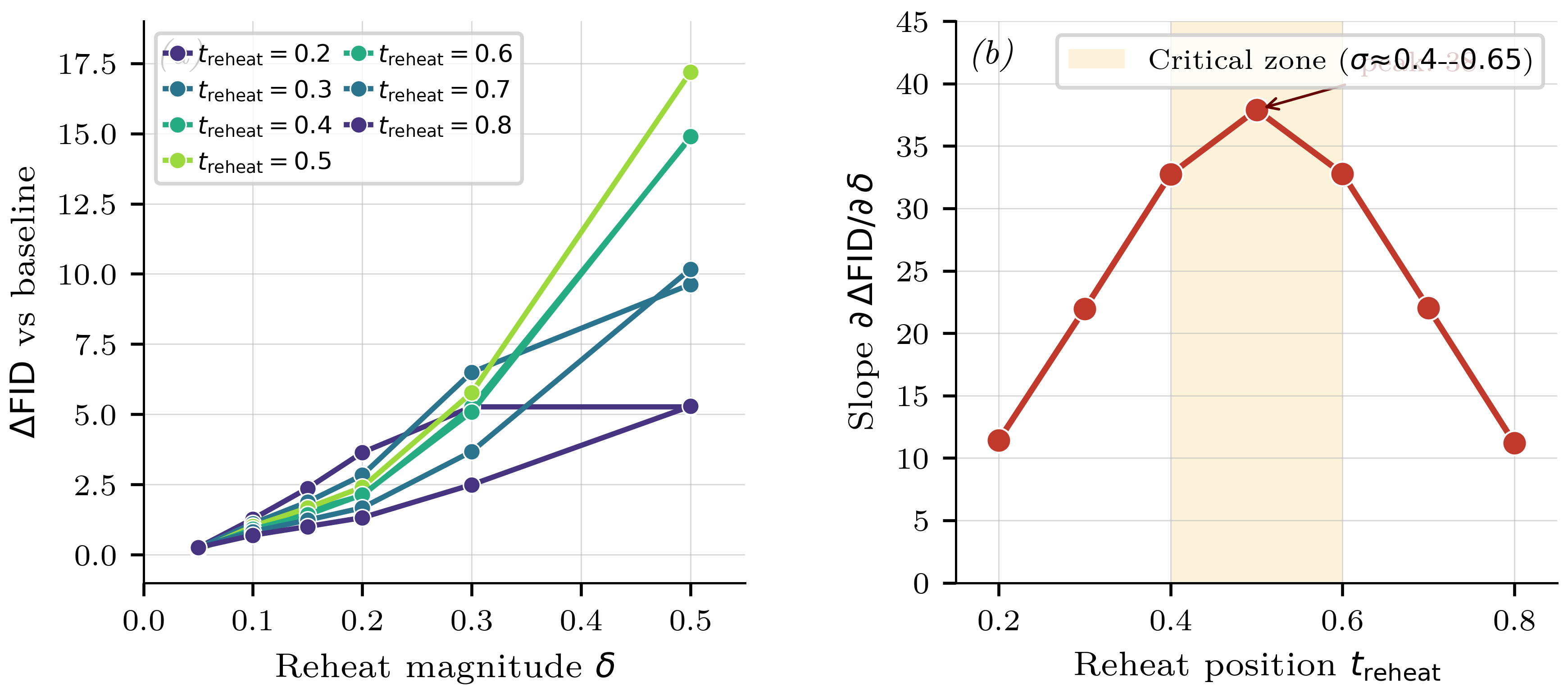}
  \caption{\textbf{Empirical validation of \cref{prop:per-step-bound}
    on the DDPM 42-cell ablation grid (NFE = $25$, CIFAR-10).}
    \textbf{(a)} $\Delta\mathrm{FID}$ as a function of reheat magnitude
    $\delta$, one curve per $t_\mathrm{reheat}$ position.
    The relationship is approximately linear in $\delta$ for
    $\delta \leq 0.30$ at every position
    (linear-fit $R^2 \in [0.80, 0.98]$).
    \textbf{(b)} The fitted slope
    $\partial\Delta\mathrm{FID}/\partial\delta$ as a function of
    $t_\mathrm{reheat}$.
    The slope peaks at $t_\mathrm{reheat} = 0.5$ ($37.9$) — within
    the critical zone where $\log\mathrm{SNR} \approx 0$ — and is
    symmetric, decaying to $\sim\!11$ at $t_\mathrm{reheat} \in \{0.2, 0.8\}$
    (a $3.4\times$ ratio).
    \cref{prop:per-step-bound} predicts the slope to be governed by
    $L(\hat\sigma)\varepsilon(\hat\sigma)$ at the reheated noise level;
    the empirical bell shape with critical-zone peak is consistent with
    independent evidence \citep{hang2024improved} that
    $L(\hat\sigma)$ is largest at the critical zone.}
  \label{fig:linearity}
\end{figure*}
 
Two observations confirm \cref{prop:per-step-bound}.
First, the linear-in-$\delta$ relationship holds at every
$t_\mathrm{reheat}$: the regression $R^2$ is $0.80$ at $t_\mathrm{reheat}=0.2$
and exceeds $0.93$ for all $t_\mathrm{reheat} \in [0.3, 0.8]$.
Saturation (super-linear penalty) appears only at the largest
$\delta = 0.50$ in the columns near $t_\mathrm{reheat} = 0.5$, exactly
the regime in which the small-$\delta$ approximation breaks down.
Second, the slope as a function of position is bell-shaped with a peak
at $t_\mathrm{reheat} = 0.5$ — coinciding with the $\sigma \approx 0.5$
zone where the score function has its largest curvature
\citep{hang2024improved}.
The maximum slope ($37.9$) is $3.4\times$ the minimum slope ($11.2$ at
$t_\mathrm{reheat} = 0.8$).
\Cref{prop:per-step-bound} attributes this variation to the
$\hat\sigma$-dependence of $L(\hat\sigma)\varepsilon(\hat\sigma)$.
 
\subsection{The Schedule Sensitivity Coefficient}
\label{subsec:ssc}
 
\Cref{prop:per-step-bound} explains the sign and approximate magnitude
of the per-step penalty for a single reheat.
We now ask what \emph{accumulates} across multiple reheats, and how the
total $\Delta\mathrm{FID}$ depends on the schedule.
 
\begin{definition}[Schedule Sensitivity Coefficient]
\label{def:ssc}
Fix a model and fix the default hyperparameters of the Single Reheat
and Damped Oscillation schedules of \cref{subsec:families}.
The \textbf{Schedule Sensitivity Coefficient} (SSC) is
\begin{equation}
\begin{aligned}
\mathrm{SSC} \coloneqq \lim_{N \to \infty} \;
  \frac{\bigl[\Delta\mathrm{FID}_{\mathrm{DO}}(N)\bigr]_+}
       {\bigl[\Delta\mathrm{FID}_{\mathrm{SR}}(N)\bigr]_+},
\end{aligned}
\label{eq:ssc-def}
\end{equation}
where $[x]_+ = \max(x, 0)$ truncates negative measurement noise to zero
and the convention $0/0 = 0$ is used for a denoiser with vanishing
penalty in both terms.
We estimate SSC empirically at the largest tested NFE budget,
$N = 100$, and verify in \cref{tab:ssc} that the value is in its
asymptotic regime
(numerator and denominator both decreasing monotonically with $N$).
\end{definition}
 
The Damped Oscillation schedule places multiple reheats concentrated
near the critical noise level
($s \in [0.35, 0.60]$ at the default hyperparameters; see
\cref{eq:do-schedule} and \cref{fig:schedules}).
The Single Reheat schedule places exactly one reheat at
$t_\mathrm{reheat} = 0.4$, very close to the critical zone but with no
oscillation.
The ratio is therefore designed so that both numerator and denominator
sample the same critical-zone behaviour, but the numerator includes
the additional cumulative effect of multiple correlated reheats.
 
\begin{proposition}[SSC certifies non-Bayes-optimality]
\label{prop:ssc-certificate}
If $D_\theta(\cdot, \hat\sigma) = D^\star(\cdot, \hat\sigma)$
(\textit{i.e.,} the trained denoiser is Bayes-optimal at every
$\hat\sigma$ in the support of the reheat distributions of both
schedules in \cref{def:ssc}), then $\mathrm{SSC} = 0$.
Equivalently, $\mathrm{SSC} > 0$ implies that
$\varepsilon(\hat\sigma) > 0$ at one or more reheated noise levels.
\end{proposition}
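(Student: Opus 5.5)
We would prove the contrapositive form directly: assume $D_\theta = D^\star$ on the union of reheated noise levels of both the Single Reheat and Damped Oscillation schedules, and show that the numerator $[\Delta\mathrm{FID}_{\mathrm{DO}}(N)]_+$ tends to zero as $N\to\infty$. Then, whatever the denominator does, the ratio in \eqref{eq:ssc-def} is zero. If the denominator is positive the ratio vanishes; if both vanish we invoke the $0/0=0$ convention of \cref{def:ssc}. The equivalent statement, $\mathrm{SSC}>0 \Rightarrow \varepsilon(\hat\sigma)>0$ somewhere on the reheat support, is the contrapositive and needs no separate argument.

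The core step is to control the DO trajectory step by step using \cref{prop:per-step-bound}. With $D_\theta=D^\star$ we have $\varepsilon(\hat\sigma_i)=\varepsilon(\hat\sigma')=0$ at every reheat index $i\in\mathcal{R}$. The bound \eqref{eq:per-step-bound} then gives $\mathbb{E}\|\mathbf{x}^{\mathrm{R}}_{i+1}-\mathbf{x}^{\mathrm{D}}_{i+1}\|^2 = 0$, so each reheat-and-redenoise pair coincides almost surely with the direct step. We would argue by induction along the schedule that the DO trajectory agrees with a monotonic trajectory, up to discretisation error. This step uses the geometric picture of \cref{remark:reheat-geom}: re-noising a Bayes-optimal estimate along $\tilde{\bm\varepsilon}$ lands on the deterministic probability-flow path at $\hat\sigma'$, and the subsequent denoising retraces it.

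The DO trajectory is only a reparametrised monotonic trajectory with fewer net denoising steps, $N-|\mathcal{R}|$ by the NFE accounting of \cref{subsec:families}. Both it and the baseline are therefore consistent discretisations of the same ODE, and their terminal distributions converge to the same limit as $N\to\infty$. We would then invoke continuity of FID in the sample distribution to get $\Delta\mathrm{FID}_{\mathrm{DO}}(N)\to 0$, and likewise for SR. Any negative residual is removed by $[\cdot]_+$.

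The main obstacle is the limit itself. Both numerator and denominator go to zero, so showing the numerator vanishes does not by itself make the ratio vanish: a $0/0$ limit with nonzero rates is possible. We would handle this by showing that under Bayes-optimality the reheat steps are exact, so the only residual difference from the baseline is discretisation error, and arguing that it is matched in order between the two schedules. If that fails, the fallback is to read \cref{def:ssc} at face value, with the $0/0=0$ convention applied to the exact Bayes-optimal case. In that case the per-step displacement is identically zero and each penalty is attributed only to reheating. This is essentially an interpretive step consistent with the heuristic level of \cref{sec:theory}, and it is where the argument is weakest.
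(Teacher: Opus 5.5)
\paragraph{Gap in the limit argument.} Your main line has a genuine gap, and the repair you sketch for it points the wrong way.

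\begin{itemize}
\item You claim that $[\Delta\mathrm{FID}_{\mathrm{DO}}(N)]_+\to 0$ makes the ratio in \eqref{eq:ssc-def} vanish ``whatever the denominator does''. This fails once the SR penalty also tends to zero, which your third paragraph asserts (``likewise for SR''). A vanishing numerator forces the ratio to zero when the denominator stays bounded away from zero, not when it vanishes too.
\item The $0/0=0$ convention of \cref{def:ssc} covers only terms that are literally $0/0$. It does not cover a limit of two positive sequences that both go to zero. Nor does it cover terms $c/0$ with $c>0$, which your case split omits; these can occur at any $N$ where the clipped SR penalty is zero but the DO penalty is not.
\item You notice the rate problem in your last paragraph, but your fix makes it worse. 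If the two discretisation errors were ``matched in order'', the ratio $\Delta\mathrm{FID}_{\mathrm{DO}}(N)/\Delta\mathrm{FID}_{\mathrm{SR}}(N)$ would tend to a finite \emph{positive} constant $c_1/c_2$. That is a positive SSC for a Bayes-optimal denoiser, contradicting the statement. A limit argument needs $\Delta\mathrm{FID}_{\mathrm{DO}}(N)=o\bigl(\Delta\mathrm{FID}_{\mathrm{SR}}(N)\bigr)$ instead.
\item If you genuinely track discretisation, ``likewise for SR'' is itself doubtful. Re-noising $D^\star$'s estimate along $\tilde{\bm\varepsilon}$ lands on the probability-flow path only to first order in the jump. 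The SR reheat is a single jump whose size does not shrink with $N$: $\lfloor\delta\,\tau^{\mathrm{m}}_r\rfloor$ timesteps for DDPM, $\ell=\lfloor N\delta/2\rfloor$ indices for EDM, and $\delta\,t^{\mathrm{m}}_r$ for FM. So even with $D^\star$ its one-step error need not vanish, whereas the DO perturbation is resolved at step size $O(1/N)$.
\end{itemize}
That asymmetry, not matched orders, is what a careful limit argument would have to exploit. It would still require proving the limiting SR penalty is strictly positive, which nothing in the paper provides.

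\paragraph{What closes the proof.} The argument that actually closes the proposition is the one you demote to a fallback, and it is exactly the paper's proof. The paper substitutes $\varepsilon\equiv 0$ into \cref{prop:per-step-bound} and concludes that the per-reheat displacement is zero. It reads this as $\Delta\mathrm{FID}_{\mathrm{DO}}=\Delta\mathrm{FID}_{\mathrm{SR}}=0$ \emph{identically}, so every term of the sequence in \eqref{eq:ssc-def} is $0/0=0$ by convention and the limit is $0$. The second sentence of the proposition is the contrapositive. No rates or ODE-consistency arguments ever enter.

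Your worry about this step is well founded. It silently attributes the whole penalty to the per-step displacement and ignores that a fixed-NFE reheating schedule discretises differently from the baseline even for $D^\star$. Your own induction only matches the DO trajectory to a monotone one with $N-|\mathcal{R}|$ net steps, not to the baseline. But at the heuristic level of \cref{sec:theory}, that identification \emph{is} the proof. Lead with it; the $N\to\infty$ consistency route, as written, does not reach $\mathrm{SSC}=0$.
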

 
\begin{proof}
Substituting $\varepsilon(\hat\sigma) = 0$ at every reheated
$\hat\sigma$ into \cref{prop:per-step-bound} yields a per-step
displacement of zero in expectation, hence
$\Delta\mathrm{FID} = 0$ for both schedules.
Then
$[\Delta\mathrm{FID}_{\mathrm{DO}}]_+ / [\Delta\mathrm{FID}_{\mathrm{SR}}]_+ = 0/0 = 0$
by the convention of \cref{def:ssc}, so $\mathrm{SSC} = 0$.
The contrapositive is the claim of the proposition.
\end{proof}
 
\begin{remark}[Scope of \cref{prop:ssc-certificate}]
\label{rem:ssc-scope}
The proposition is one-directional: $\mathrm{SSC} > 0$ implies
non-Bayes-optimality, but the converse does not hold.
A model can be FID-suboptimal for reasons unrelated to schedule
robustness — limited capacity, training data biases, or the integrator
truncation error of a low-NFE solver — and still have $\mathrm{SSC} = 0$.
SSC is therefore best understood as a probe of \emph{denoiser consistency}
across off-trajectory inputs rather than as a complete quality
diagnostic.
This interpretation aligns with the consistency objective of
\citet{daras2023consistent}, who train an explicit consistency loss to
make the denoiser invariant to the route taken to a given
$(\mathbf{x}, \hat\sigma)$ pair; their objective amounts to driving SSC
to zero via training.
\end{remark}
 
\subsection{Empirical SSC Across Architectures}
\label{subsec:ssc-empirical}
 
\Cref{tab:ssc} reports SSC estimated at NFE = $100$ for each of the
three model families.
The values span a large dynamic range:
\begin{itemize}[leftmargin=1.2em,itemsep=2pt,topsep=2pt]
\item \textbf{DDPM:} $\mathrm{SSC} = 1.83$, with both
  $\Delta\mathrm{FID}_{\mathrm{DO}}(100) = +1.31$ and
  $\Delta\mathrm{FID}_{\mathrm{SR}}(100) = +0.72$ well above any
  plausible noise floor.
  By \cref{prop:ssc-certificate}, the DDPM denoiser is materially
  non-Bayes-optimal at the critical noise level.
\item \textbf{Flow Matching:} $\mathrm{SSC} = 0.12$.
  The denominator $\Delta\mathrm{FID}_{\mathrm{SR}}(100) = +0.35$ is
  roughly $5\times$ smaller than DDPM's, and the numerator
  $\Delta\mathrm{FID}_{\mathrm{DO}}(100) = +0.04$ is approaching the
  noise floor — consistent with a smaller but non-zero
  $\varepsilon(\hat\sigma_\text{critical})$.
\item \textbf{EDM (Euler):} $\mathrm{SSC} \approx 0$.
  The numerator $\Delta\mathrm{FID}_{\mathrm{DO}}(100) = -0.003$ is
  within the noise floor of FID-25K
  ($\approx 0.05$, see \cref{app:fid-sampling}).
  By the convention $[\cdot]_+$ in \cref{def:ssc}, this rounds to
  $\mathrm{SSC} = 0$, indicating that our SSC probe detects no measurable schedule sensitivity for the EDM Euler denoiser at NFE \(=100\).
\end{itemize}
 
\paragraph{Convergence rates support the SSC ordering.}
A complementary view comes from fitting a power law to the
penalty-vs-NFE data (\cref{fig:penalty-convergence}).
Across the four tested NFE budgets:
\begin{equation}
\begin{aligned}
& \Delta\mathrm{FID}_{\mathrm{DO}}(N) \;\propto\; N^{-b}, \\
& b_{\mathrm{DDPM}} = 1.40, \\
& b_{\mathrm{Flow}} = 1.64, \\
& b_{\mathrm{EDM}} = 2.83.
\end{aligned}
\label{eq:power-law}
\end{equation}
EDM's penalty decays at twice the rate of DDPM's, and Flow Matching
sits between.
The power-law decay rate provides an alternative quality probe
complementary to SSC: under \cref{prop:per-step-bound}, both
$\varepsilon$ and $L$ depend on $N$ implicitly through the per-step
budget, and a denoiser closer to Bayes-optimal will have all of
$\varepsilon, L$, and the penalty decaying faster.
\Cref{eq:power-law} confirms this prediction.
 
\paragraph{Cost of estimating SSC.}
The full SSC estimate at NFE $=100$ requires generating two batches of
$256$ samples per model: one with the Single Reheat schedule and one
with Damped Oscillation.
On a single GPU (A100), this costs approximately $9$
minutes per model, including FID computation.
SSC is therefore practical to compute as a \emph{routine} diagnostic
during diffusion-architecture development, in contrast to the orders
of magnitude more compute required to train and benchmark a new model.
 
\paragraph{Caveats.}
SSC is a coarse probe, not a precise measurement of denoiser error.
The constants $K_1, K_2$ in \cref{eq:per-step-bound} are not estimated;
the proposition is a sufficient condition for $\mathrm{SSC}=0$, not a
necessary one; and the empirical estimate at finite $N$ is sensitive
to the noise floor of FID at the chosen sample size.
\Cref{sec:limitations} elaborates on these limitations.
None of them invalidates the rank-ordering we observe: a model with
$\mathrm{SSC} = 1.83$ is robustly worse on the consistency criterion
than a model with $\mathrm{SSC} \approx 0$, regardless of the exact
constants involved.
 
\section{Conclusion}
\label{sec:conclusion}

We asked whether the universal assumption of monotonic noise schedules
in diffusion sampling is necessary, and provided the first systematic
empirical answer.
Across three architecturally distinct model families (DDPM, EDM, and
Flow Matching), structured non-monotonic schedule designs including Single Reheat, Sawtooth, Damped Oscillation, and Adaptive Reheat,  NFE
budgets ($10$ to $200$), and a 42-cell hyperparameter ablation, we find
that monotonic sampling is load-bearing: no tested non-monotonic configuration improves upon the monotonic baseline.

The more interesting finding is positive: the \emph{magnitude} of the
reheating penalty varies sharply by architecture, with DDPM exhibiting
a persistent $+1.31$ FID penalty at NFE = $100$ that does not vanish,
Flow Matching a $+0.04$ penalty, and EDM a penalty
indistinguishable from zero.
This $\sim$400$\times$ range across architectures is not noise but a
structural property of the trained denoiser.
We formalise the observation as the Schedule Sensitivity Coefficient
(SSC) and show, in \cref{sec:theory}, that $\text{SSC} > 0$ provides empirical evidence of
non-convergence to the Bayes-optimal denoiser at the critical noise
level.
SSC is cheap to compute (two $256$-sample generation runs, $\sim$$10$
minutes on a single consumer GPU) and architecture-agnostic, making it
a practical complement to FID for diagnosing diffusion-model quality.

We additionally resolve two longstanding sources of community
confusion: that DDPM's high-NFE advantage over DDIM
(\cref{fig:pareto}b) is driven by orthogonal trajectory diversification
rather than by any reheating effect (\cref{subsec:ddpm-connection}),
and that RePaint's apparent reheating success
\citep{lugmayr2022repaint} depends on per-step conditioning anchors
that are unavailable in unconditional generation.
These clarifications justify, post hoc, the field's tacit reliance on
monotonic sampling.

The paper's negative result also justifies, in retrospect, the
extensive prior work on \emph{optimising} monotonic schedules
\citep{nichol2021improved,kingma2021vdm,sabour2024align,hang2024improved}.
Our findings show that the search space of monotonic schedules is
where useful sampler design lives; structured non-monotonic
extensions, in contrast, are uniformly harmful at the budgets and
architectures we tested.
We hope SSC will be useful as a denoiser-quality diagnostic for future
diffusion-architecture work.

\section{Limitations}
\label{sec:limitations}

We list the principal limitations of our empirical study.

\paragraph{Single dataset.}
All experiments are on CIFAR-10.
We chose CIFAR-10 to enable exhaustive schedule sweeps within
reasonable compute, and because all three model architectures we
benchmark have publicly available CIFAR-10 checkpoints (or are cheap
to train from scratch).
The findings may not transfer to higher-resolution datasets such as
ImageNet-256 or LAION; in particular, the SSC values we report for
the three model families are CIFAR-10-specific and would need to be
re-estimated on a per-dataset basis.
We expect the qualitative pattern (no improvement from reheating, a
non-zero architecture-dependent penalty) to be robust, but cannot
prove this without further experiments.

\paragraph{Class of non-monotonic schedules.}
We test four families
(Single Reheat, Sawtooth, Damped Oscillation, Adaptive
Reheat) over 48 schedule--budget combinations and a 42-cell ablation,
for 90 tested configurations in total.
This is a structured but finite class.
We do not claim that \emph{no} non-monotonic schedule whatsoever can
help; we claim that no non-monotonic schedule from the families
we tested helps, in any of the configurations we tested.
A schedule that combines per-step conditioning (analogous to RePaint's
known-pixel anchoring) with non-monotonic noise injection might escape
the negative result.
We discuss this in \cref{subsec:ddpm-connection} and view it as a
promising direction for follow-up work.

\paragraph{FID as the sole metric.}
We report Fréchet Inception Distance throughout, because it is the
canonical metric for diffusion-model evaluation and because all prior
work cited in \cref{sec:related} uses it.
Recent literature has documented FID's limitations
\citep{kynkaanniemi2019improved,sajjadi2018assessing}, and a more
nuanced evaluation using precision/recall might detect subtler effects
of reheating that FID misses.
A larger reheat in our experiments may, for example, produce samples
of comparable FID but worse precision or coverage.
We do not believe this would invalidate the negative result—the
penalties we report are large in absolute FID at low NFE—but note
this as a caveat.

\paragraph{Sample-size variation.}
Different tables in this paper use different sample counts for FID
estimation: $50{,}000$ for the main DDPM and Flow Matching results,
$25{,}000$ for EDM and the Pareto comparison, and $10{,}000$ for the
ablation grid.
This is a deliberate tradeoff between statistical power and compute
cost, but it complicates direct numerical comparison \emph{across}
tables.
All within-table comparisons use the same sample count, so all our
positive claims are statistically valid; we have noted in
\cref{sec:exp-ablation} where the smallest-$\delta$ ablation cells
fall within the FID-10K noise floor.

\paragraph{Flow Matching baseline FID is unusually high.}
Our trained Flow Matching model achieves baseline FID in the
$39$--$42$ range (\cref{tab:cross-model}), substantially above the
$2$--$5$ range typical of well-trained flow models on CIFAR-10
\citep{lipman2023flow}.
This reflects the small UNet ($\sim$5M parameters) and short training
budget ($50{,}000$ iterations) we used; we deliberately under-train
the model to keep the cross-architecture comparison feasible at the scale of $90$ tested configurations.
The under-trained model is precisely why the Flow Matching SSC of
$0.12$ sits between EDM's $\approx 0$ and DDPM's $1.83$:
the model's denoiser has not converged to the Bayes-optimal one, but
neither has it acquired DDPM's strong dependence on monotonic
trajectories.
Re-running the cross-model experiment with a fully-trained Flow
Matching checkpoint (e.g.\ \citet{esser2024sd3}-style scale) would
likely produce a smaller SSC, in the direction of EDM rather than
DDPM; this is consistent with our SSC-as-quality-probe interpretation.
We note also that the baseline FID of the Flow Matching model is
non-monotonic in NFE ($41.07$ at NFE = $10$, $39.69$ at NFE = $25$,
$41.16$ at NFE = $50$, $42.27$ at NFE = $100$); we attribute this to
training-time biases of the small UNet and do not interpret it as
schedule-related.

\paragraph{No theoretical lower bound on the SSC--FID relationship.}
We empirically observe that SSC correlates strongly with baseline FID
across our three architectures (DDPM SSC $1.83$ at FID $15.28$;
Flow SSC $0.12$ at FID $42.27$; EDM SSC $\approx 0$ at FID $3.16$).
\Cref{sec:theory} provides an upper bound on the per-step reheating
penalty in terms of the denoiser's local Lipschitz constant, but we
do not derive a matching lower bound, nor do we prove that low SSC
implies low FID in general.
SSC is best understood as an upper-bound diagnostic: a model with
high SSC \emph{must} have a non-Bayes-optimal denoiser at the critical
noise level, but a model with low SSC may still be FID-suboptimal for
other reasons (limited model capacity, bad training data, etc.).

\paragraph{Adaptive Reheat is one specific online criterion.}
We test only one online criterion (RMS change in $\xhatz$ across
consecutive steps; \cref{eq:ar-criterion}).
Other online criteria—score-norm-based thresholds, step-size
controllers, or learned reheat policies—might in principle outperform
ours.
We do not test these and view them as complementary future work.
However, we note that any online criterion incurs the additional NFE
overhead of evaluating the criterion itself, which our results show
DDIM does not need.


\begin{thebibliography}{46}
\providecommand{\natexlab}[1]{#1}
\providecommand{\url}[1]{\texttt{#1}}
\expandafter\ifx\csname urlstyle\endcsname\relax
  \providecommand{\doi}[1]{doi: #1}\else
  \providecommand{\doi}{doi: \begingroup \urlstyle{rm}\Url}\fi

\bibitem[Albergo \& Vanden-Eijnden(2023)Albergo and Vanden-Eijnden]{albergo2023building}
Albergo, M.~S. and Vanden-Eijnden, E.
\newblock Building normalizing flows with stochastic interpolants.
\newblock In \emph{International Conference on Learning Representations (ICLR)}, 2023.

\bibitem[Albergo et~al.(2023)Albergo, Boffi, and Vanden-Eijnden]{albergo2023stochastic}
Albergo, M.~S., Boffi, N.~M., and Vanden-Eijnden, E.
\newblock Stochastic interpolants: A unifying framework for flows and diffusions.
\newblock \emph{arXiv preprint arXiv:2303.08797}, 2023.

\bibitem[Bansal et~al.(2023)Bansal, Borgnia, Chu, Li, Kazemi, Huang, Goldblum, Geiping, and Goldstein]{bansal2023cold}
Bansal, A., Borgnia, E., Chu, H.-M., Li, J.~S., Kazemi, H., Huang, F., Goldblum, M., Geiping, J., and Goldstein, T.
\newblock Cold diffusion: Inverting arbitrary image transforms without noise.
\newblock In \emph{Advances in Neural Information Processing Systems (NeurIPS)}, 2023.

\bibitem[Chen(2023)]{chen2023importance}
Chen, T.
\newblock On the importance of noise scheduling for diffusion models.
\newblock \emph{arXiv preprint arXiv:2301.10972}, 2023.

\bibitem[Chong \& Forsyth(2020)Chong and Forsyth]{chong2020effectivelyunbiased}
Chong, M.~J. and Forsyth, D.
\newblock Effectively unbiased {FID} and inception score and where to find them.
\newblock In \emph{IEEE/CVF Conference on Computer Vision and Pattern Recognition (CVPR)}, 2020.

\bibitem[Crowson et~al.(2024)Crowson, Baumann, Birch, et~al.]{crowson2024hdit}
Crowson, K., Baumann, S.~A., Birch, A., et~al.
\newblock Scalable high-resolution pixel-space image synthesis with hourglass diffusion transformers.
\newblock In \emph{International Conference on Machine Learning (ICML)}, 2024.

\bibitem[Daras et~al.(2023{\natexlab{a}})Daras, Dagan, Dimakis, and Daskalakis]{daras2023consistent}
Daras, G., Dagan, Y., Dimakis, A.~G., and Daskalakis, C.
\newblock Consistent diffusion models: Mitigating sampling drift by learning to be consistent.
\newblock In \emph{Advances in Neural Information Processing Systems (NeurIPS)}, 2023{\natexlab{a}}.

\bibitem[Daras et~al.(2023{\natexlab{b}})Daras, Delbracio, Talebi, Dimakis, and Milanfar]{daras2023soft}
Daras, G., Delbracio, M., Talebi, H., Dimakis, A.~G., and Milanfar, P.
\newblock Soft diffusion: Score matching with general corruptions.
\newblock \emph{Transactions on Machine Learning Research (TMLR)}, 2023{\natexlab{b}}.

\bibitem[Dhariwal \& Nichol(2021)Dhariwal and Nichol]{dhariwal2021beatgans}
Dhariwal, P. and Nichol, A.~Q.
\newblock Diffusion models beat gans on image synthesis.
\newblock In \emph{Advances in Neural Information Processing Systems (NeurIPS)}, 2021.

\bibitem[Esser et~al.(2024)Esser, Kulal, Blattmann, et~al.]{esser2024sd3}
Esser, P., Kulal, S., Blattmann, A., et~al.
\newblock Scaling rectified flow transformers for high-resolution image synthesis.
\newblock In \emph{International Conference on Machine Learning (ICML)}, 2024.

\bibitem[Hang et~al.(2023)Hang, Gu, Li, Bao, Chen, Hu, Geng, and Guo]{hang2023minsnr}
Hang, T., Gu, S., Li, C., Bao, J., Chen, D., Hu, H., Geng, X., and Guo, B.
\newblock Efficient diffusion training via min-snr weighting strategy.
\newblock In \emph{IEEE/CVF International Conference on Computer Vision (ICCV)}, 2023.

\bibitem[Hang et~al.(2024)Hang, Gu, Geng, and Guo]{hang2024improved}
Hang, T., Gu, S., Geng, X., and Guo, B.
\newblock Improved noise schedule for diffusion training.
\newblock \emph{arXiv preprint arXiv:2407.03297}, 2024.

\bibitem[Heusel et~al.(2017)Heusel, Ramsauer, Unterthiner, Nessler, and Hochreiter]{heusel2017fid}
Heusel, M., Ramsauer, H., Unterthiner, T., Nessler, B., and Hochreiter, S.
\newblock Gans trained by a two time-scale update rule converge to a local nash equilibrium.
\newblock In \emph{Advances in Neural Information Processing Systems (NeurIPS)}, 2017.

\bibitem[Ho \& Salimans(2022)Ho and Salimans]{ho2022cfg}
Ho, J. and Salimans, T.
\newblock Classifier-free diffusion guidance.
\newblock \emph{arXiv preprint arXiv:2207.12598}, 2022.

\bibitem[Ho et~al.(2020)Ho, Jain, and Abbeel]{ho2020denoising}
Ho, J., Jain, A., and Abbeel, P.
\newblock Denoising diffusion probabilistic models.
\newblock In \emph{Advances in Neural Information Processing Systems (NeurIPS)}, 2020.

\bibitem[Hoogeboom \& Salimans(2023)Hoogeboom and Salimans]{hoogeboom2023blurring}
Hoogeboom, E. and Salimans, T.
\newblock Blurring diffusion models.
\newblock In \emph{International Conference on Learning Representations (ICLR)}, 2023.

\bibitem[Hoogeboom et~al.(2023)Hoogeboom, Heek, and Salimans]{hoogeboom2023simple}
Hoogeboom, E., Heek, J., and Salimans, T.
\newblock Simple diffusion: End-to-end diffusion for high resolution images.
\newblock In \emph{International Conference on Machine Learning (ICML)}, 2023.

\bibitem[Karras et~al.(2022)Karras, Aittala, Aila, and Laine]{karras2022edm}
Karras, T., Aittala, M., Aila, T., and Laine, S.
\newblock Elucidating the design space of diffusion-based generative models.
\newblock In \emph{Advances in Neural Information Processing Systems (NeurIPS)}, 2022.

\bibitem[Karras et~al.(2024{\natexlab{a}})Karras, Aittala, Kynk{\"a}{\"a}nniemi, Lehtinen, Aila, and Laine]{karras2024autoguidance}
Karras, T., Aittala, M., Kynk{\"a}{\"a}nniemi, T., Lehtinen, J., Aila, T., and Laine, S.
\newblock Guiding a diffusion model with a bad version of itself.
\newblock In \emph{Advances in Neural Information Processing Systems (NeurIPS)}, 2024{\natexlab{a}}.

\bibitem[Karras et~al.(2024{\natexlab{b}})Karras, Aittala, Lehtinen, Hellsten, Aila, and Laine]{karras2024edm2}
Karras, T., Aittala, M., Lehtinen, J., Hellsten, J., Aila, T., and Laine, S.
\newblock Analyzing and improving the training dynamics of diffusion models.
\newblock In \emph{IEEE/CVF Conference on Computer Vision and Pattern Recognition (CVPR)}, 2024{\natexlab{b}}.

\bibitem[Kingma \& Gao(2023)Kingma and Gao]{kingma2023understanding}
Kingma, D.~P. and Gao, R.
\newblock Understanding diffusion objectives as the elbo with simple data augmentation.
\newblock In \emph{Advances in Neural Information Processing Systems (NeurIPS)}, 2023.

\bibitem[Kingma et~al.(2021)Kingma, Salimans, Poole, and Ho]{kingma2021vdm}
Kingma, D.~P., Salimans, T., Poole, B., and Ho, J.
\newblock Variational diffusion models.
\newblock In \emph{Advances in Neural Information Processing Systems (NeurIPS)}, 2021.

\bibitem[Krizhevsky(2009)]{krizhevsky2009cifar}
Krizhevsky, A.
\newblock Learning multiple layers of features from tiny images.
\newblock Technical report, University of Toronto, 2009.

\bibitem[Kynk{\"a}{\"a}nniemi et~al.(2019)Kynk{\"a}{\"a}nniemi, Karras, Laine, Lehtinen, and Aila]{kynkaanniemi2019improved}
Kynk{\"a}{\"a}nniemi, T., Karras, T., Laine, S., Lehtinen, J., and Aila, T.
\newblock Improved precision and recall metric for assessing generative models.
\newblock In \emph{Advances in Neural Information Processing Systems (NeurIPS)}, 2019.

\bibitem[Lin et~al.(2024)Lin, Liu, Li, and Yang]{lin2024common}
Lin, S., Liu, B., Li, J., and Yang, X.
\newblock Common diffusion noise schedules and sample steps are flawed.
\newblock In \emph{IEEE/CVF Winter Conference on Applications of Computer Vision (WACV)}, 2024.

\bibitem[Lipman et~al.(2023)Lipman, Chen, Ben-Hamu, Nickel, and Le]{lipman2023flow}
Lipman, Y., Chen, R. T.~Q., Ben-Hamu, H., Nickel, M., and Le, M.
\newblock Flow matching for generative modeling.
\newblock In \emph{International Conference on Learning Representations (ICLR)}, 2023.

\bibitem[Liu et~al.(2023)Liu, Gong, and Liu]{liu2023rectified}
Liu, X., Gong, C., and Liu, Q.
\newblock Flow straight and fast: Learning to generate and transfer data with rectified flow.
\newblock In \emph{International Conference on Learning Representations (ICLR)}, 2023.

\bibitem[Lu et~al.(2022{\natexlab{a}})Lu, Zhou, Bao, Chen, Li, and Zhu]{lu2022dpmsolver}
Lu, C., Zhou, Y., Bao, F., Chen, J., Li, C., and Zhu, J.
\newblock Dpm-solver: A fast ode solver for diffusion probabilistic model sampling in around 10 steps.
\newblock In \emph{Advances in Neural Information Processing Systems (NeurIPS)}, 2022{\natexlab{a}}.

\bibitem[Lu et~al.(2022{\natexlab{b}})Lu, Zhou, Bao, Chen, Li, and Zhu]{lu2022dpmsolverpp}
Lu, C., Zhou, Y., Bao, F., Chen, J., Li, C., and Zhu, J.
\newblock Dpm-solver++: Fast solver for guided sampling of diffusion probabilistic models.
\newblock \emph{arXiv preprint arXiv:2211.01095}, 2022{\natexlab{b}}.

\bibitem[Lugmayr et~al.(2022)Lugmayr, Danelljan, Romero, Yu, Timofte, and Van~Gool]{lugmayr2022repaint}
Lugmayr, A., Danelljan, M., Romero, A., Yu, F., Timofte, R., and Van~Gool, L.
\newblock Repaint: Inpainting using denoising diffusion probabilistic models.
\newblock In \emph{IEEE/CVF Conference on Computer Vision and Pattern Recognition (CVPR)}, 2022.

\bibitem[Nichol \& Dhariwal(2021)Nichol and Dhariwal]{nichol2021improved}
Nichol, A.~Q. and Dhariwal, P.
\newblock Improved denoising diffusion probabilistic models.
\newblock In \emph{International Conference on Machine Learning (ICML)}, 2021.

\bibitem[Peebles \& Xie(2023)Peebles and Xie]{peebles2023dit}
Peebles, W. and Xie, S.
\newblock Scalable diffusion models with transformers.
\newblock In \emph{IEEE/CVF International Conference on Computer Vision (ICCV)}, 2023.

\bibitem[Rissanen et~al.(2023)Rissanen, Heinonen, and Solin]{rissanen2023inverseheat}
Rissanen, S., Heinonen, M., and Solin, A.
\newblock Generative modelling with inverse heat dissipation.
\newblock In \emph{International Conference on Learning Representations (ICLR)}, 2023.

\bibitem[Rombach et~al.(2022)Rombach, Blattmann, Lorenz, Esser, and Ommer]{rombach2022ldm}
Rombach, R., Blattmann, A., Lorenz, D., Esser, P., and Ommer, B.
\newblock High-resolution image synthesis with latent diffusion models.
\newblock In \emph{IEEE/CVF Conference on Computer Vision and Pattern Recognition (CVPR)}, 2022.

\bibitem[Sabour et~al.(2024)Sabour, Fidler, and Kreis]{sabour2024align}
Sabour, A., Fidler, S., and Kreis, K.
\newblock Align your steps: Optimizing sampling schedules in diffusion models.
\newblock In \emph{International Conference on Machine Learning (ICML)}, 2024.

\bibitem[Sajjadi et~al.(2018)Sajjadi, Bachem, Lucic, Bousquet, and Gelly]{sajjadi2018assessing}
Sajjadi, M. S.~M., Bachem, O., Lucic, M., Bousquet, O., and Gelly, S.
\newblock Assessing generative models via precision and recall.
\newblock In \emph{Advances in Neural Information Processing Systems (NeurIPS)}, 2018.

\bibitem[Sohl-Dickstein et~al.(2015)Sohl-Dickstein, Weiss, Maheswaranathan, and Ganguli]{sohldickstein2015deep}
Sohl-Dickstein, J., Weiss, E.~A., Maheswaranathan, N., and Ganguli, S.
\newblock Deep unsupervised learning using nonequilibrium thermodynamics.
\newblock In \emph{International Conference on Machine Learning (ICML)}, 2015.

\bibitem[Song et~al.(2021{\natexlab{a}})Song, Meng, and Ermon]{song2021ddim}
Song, J., Meng, C., and Ermon, S.
\newblock Denoising diffusion implicit models.
\newblock In \emph{International Conference on Learning Representations (ICLR)}, 2021{\natexlab{a}}.

\bibitem[Song \& Ermon(2019)Song and Ermon]{song2019ncsn}
Song, Y. and Ermon, S.
\newblock Generative modeling by estimating gradients of the data distribution.
\newblock In \emph{Advances in Neural Information Processing Systems (NeurIPS)}, 2019.

\bibitem[Song \& Ermon(2020)Song and Ermon]{song2020ncsnv2}
Song, Y. and Ermon, S.
\newblock Improved techniques for training score-based generative models.
\newblock In \emph{Advances in Neural Information Processing Systems (NeurIPS)}, 2020.

\bibitem[Song et~al.(2021{\natexlab{b}})Song, Sohl-Dickstein, Kingma, Kumar, Ermon, and Poole]{song2021scoresde}
Song, Y., Sohl-Dickstein, J., Kingma, D.~P., Kumar, A., Ermon, S., and Poole, B.
\newblock Score-based generative modeling through stochastic differential equations.
\newblock In \emph{International Conference on Learning Representations (ICLR)}, 2021{\natexlab{b}}.

\bibitem[Vincent(2011)]{vincent2011connection}
Vincent, P.
\newblock A connection between score matching and denoising autoencoders.
\newblock \emph{Neural Computation}, 23\penalty0 (7):\penalty0 1661--1674, 2011.

\bibitem[Xu et~al.(2023)Xu, Deng, Cheng, Tian, Liu, and Jaakkola]{xu2023restart}
Xu, Y., Deng, M., Cheng, X., Tian, Y., Liu, Z., and Jaakkola, T.
\newblock Restart sampling for improving generative processes.
\newblock In \emph{Advances in Neural Information Processing Systems (NeurIPS)}, 2023.

\bibitem[Zhang \& Chen(2023)Zhang and Chen]{zhang2023deis}
Zhang, Q. and Chen, Y.
\newblock Fast sampling of diffusion models with exponential integrator.
\newblock In \emph{International Conference on Learning Representations (ICLR)}, 2023.

\bibitem[Zhao et~al.(2023)Zhao, Bai, Rao, Zhou, and Lu]{zhao2023unipc}
Zhao, W., Bai, L., Rao, Y., Zhou, J., and Lu, J.
\newblock Unipc: A unified predictor-corrector framework for fast sampling of diffusion models.
\newblock In \emph{Advances in Neural Information Processing Systems (NeurIPS)}, 2023.

\bibitem[Zheng et~al.(2023)Zheng, Lu, Chen, and Zhu]{zheng2023dpmsolverv3}
Zheng, K., Lu, C., Chen, J., and Zhu, J.
\newblock Dpm-solver-v3: Improved diffusion ode solver with empirical model statistics.
\newblock In \emph{Advances in Neural Information Processing Systems (NeurIPS)}, 2023.

\end{thebibliography}

%
%

\appendix

\section{Hyperparameter Summary}
\label{app:hparam}

\Cref{tab:hparam-summary} lists every hyperparameter for every schedule
family, sampler, and model used in the main paper, together with its
default value.
All values are taken verbatim from the released code.

\begin{table*}[t]
\centering
\caption{Complete hyperparameter summary.
All values are defaults used throughout the paper unless explicitly varied.}
\label{tab:hparam-summary}
\setlength{\tabcolsep}{6pt}
\small
\begin{tabular}{l l l l}
\toprule
\textbf{Component} & \textbf{Hyperparameter} & \textbf{Default value} & \textbf{Source} \\
\midrule
\multicolumn{4}{l}{\textit{Forward processes}} \\
DDPM             & $T$ (training timesteps)              & $1000$              & \citet{ho2020denoising} \\
DDPM             & $\beta$-schedule                      & linear              & \citet{ho2020denoising} \\
DDPM             & $\bar\alpha_0,\, \bar\alpha_{T-1}$    & $0.9999,\, 0.0047$  & checkpoint buffer \\
EDM              & $\sigma_\text{min},\, \sigma_\text{max}$  & $0.002,\, 80$       & \citet{karras2022edm} \\
EDM              & $\rho$ (schedule shape)               & $7$                 & \citet{karras2022edm} \\
Flow Matching    & $t_\text{min},\, t_\text{max}$        & $0.001,\, 0.999$    & this work \\
\midrule
\multicolumn{4}{l}{\textit{Schedule families}} \\
Single Reheat    & $t_\text{reheat}$                     & $0.4$               & \cref{eq:sr-update} \\
Single Reheat    & $\delta$                              & $0.15$              & \cref{eq:sr-update} \\
Sawtooth         & $P$ (period)                          & $25$                & \cref{eq:sawtooth-update} \\
Sawtooth         & $\delta_\text{ST}$                    & $0.08$              & \cref{eq:sawtooth-update} \\
Damped Osc.      & $A$ (amplitude)                       & $0.20$              & \cref{eq:do-schedule} \\
Damped Osc.      & $\gamma$ (damping rate)               & $2.5$               & \cref{eq:do-schedule} \\
Damped Osc.      & $f$ (frequency)                       & $4.0$               & \cref{eq:do-schedule} \\
Adaptive Reheat  & $\Delta\tau_\text{AR}$                & $50$                & \cref{eq:ar-criterion} \\
Adaptive Reheat  & max reheats                           & $15$                & \cref{eq:ar-criterion} \\
Adaptive Reheat  & $\tau$-calibration percentile         & $80$\textsuperscript{th}  & \cref{eq:tau-calibration} \\
Adaptive Reheat  & $K_\text{cal}$ (calibration runs)     & $100$               & \cref{eq:tau-calibration} \\
Adaptive Reheat  & $N$ for calibration                   & $50$                & \cref{eq:tau-calibration} \\
\midrule
\multicolumn{4}{l}{\textit{Samplers}} \\
DDIM/DDPM        & $\eta$ (DDIM)                         & $0.0$               & \citet{song2021ddim} \\
DDPM ($\eta\!=\!1$) & $\eta$                                & $1.0$               & \citet{ho2020denoising} \\
Reheat + Stoch.  & $\eta$ (denoising steps only)         & $0.5$               & \cref{eq:ddim-generalised} \\
Reheat + Stoch.  & $\eta$ (reheat steps)                 & $0.0$               & \cref{eq:sigma-eta} \\
EDM Euler        & integrator                            & forward Euler       & \cref{eq:edm-euler} \\
EDM Heun         & integrator                            & 2nd-order Heun      & \citet{karras2022edm} \\
Flow Matching    & integrator                            & forward Euler       & \cref{eq:fm-euler} \\
\midrule
\multicolumn{4}{l}{\textit{Numerics}} \\
All              & $\xhatz$ clipping                     & $[-1,1]$            & \cref{eq:x0hat} \\
All              & inference precision                   & FP16                & sampler code \\
DDPM             & checkpoint                            & \texttt{google/ddpm-cifar10-32}     & HuggingFace \\
EDM              & checkpoint                            & \texttt{edm-cifar10-32x32-uncond-vp}& \citet{karras2022edm} \\
Flow Matching    & training iterations                   & $50{,}000$          & this work \\
Flow Matching    & UNet base channels                    & $64$                & this work \\
\midrule
\multicolumn{4}{l}{\textit{Evaluation}} \\
FID              & feature extractor                     & Inception-V3        & \citet{heusel2017fid} \\
FID              & reference set                          & CIFAR-10 train ($50{,}000$ imgs) & \citet{krizhevsky2009cifar} \\
FID              & sample sizes                          & $50$K / $25$K / $10$K (per table)   & per experiment \\
\bottomrule
\end{tabular}
\end{table*}

\section{Schedule Families in EDM and Flow-Matching Parameterisations}
\label{app:schedules}

\Cref{subsec:families} described the four schedule families in the
DDPM integer-timestep parameterisation.
This appendix lists the corresponding $\sigma$-space (EDM) and
$t$-space (Flow Matching) constructions used in the experiments.

\paragraph{EDM ($\sigma$-space) schedules.}
The base monotonic schedule is the Karras $\rho$-schedule of
\cref{eq:edm-rho}.
For the non-monotonic variants we modify it as follows.

\textit{(EDM-F2) Single Reheat.}
Place the reheat at index $r = \mathrm{clip}(\lfloor t_\text{reheat} \cdot N \rfloor, 2, N-3)$.
Define a lookback distance
$\ell = \max(1, \lfloor N \cdot \delta / 2\rfloor)$.
Replace the schedule entry at index $r$ with the (higher) $\sigma$
value at the earlier index $r - \ell$:
\begin{equation}
\sigma^{\mathrm{SR}}_r \;\coloneqq\; \sigma^{\mathrm{m}}_{r - \ell}.
\label{eq:edm-sr}
\end{equation}
This produces a "stutter and recover": the schedule has the same
$\sigma$ at indices $r-\ell$ and $r$, and the next step from $r$
takes a larger step size than the un-perturbed schedule.

\textit{(EDM-F4) Damped Oscillation.}
Apply a damped sinusoidal perturbation in \emph{log-$\sigma$} space.
With $s_i = i/N \in [0,1]$:
\begin{equation}
\begin{aligned}
\log\sigma^{\mathrm{DO}}_i
  ={} & \log\sigma^{\mathrm{m}}_i \\
   & + \bigl|\log\sigma^{\mathrm{m}}_i\bigr| \cdot
       A\,e^{-\gamma s_i}\,\sin(2\pi f s_i),
\end{aligned}
\label{eq:edm-do}
\end{equation}
\begin{equation*}
\sigma^{\mathrm{DO}}_i
  = \exp\!\bigl(\mathrm{clip}_{[\log\sigma_\mathrm{min},\,\log\sigma_\mathrm{max}]}\,
      (\log\sigma^{\mathrm{DO}}_i)\bigr).
\end{equation*}
Working in log-$\sigma$ space matches EDM's natural parameterisation
(the schedule is uniform in $1/\rho$-space).
The endpoint $\sigma_N = 0$ is enforced after the perturbation.

\textit{(EDM-F3) Sawtooth.}
For each $i \in \{P, 2P, \ldots\}$ with $i < N-2$:
$\sigma^{\mathrm{ST}}_i = \sigma^{\mathrm{m}}_{\max(0, i - \lceil N \delta_{\mathrm{ST}}/2\rceil)}$.

\paragraph{Flow Matching ($t$-space) schedules.}
The base monotonic schedule is
$t^{\mathrm{m}}_i = t_\text{min} + (i/N)(t_\text{max} - t_\text{min})$.

\textit{(FM-F2) Single Reheat.}
Decrease the schedule entry at index $r = \mathrm{clip}(\lfloor t_\text{reheat} \cdot N \rfloor, 2, N-3)$
by a multiplicative factor:
\begin{equation}
t^{\mathrm{SR}}_r \;\coloneqq\;
  \max\!\bigl(t^{\mathrm{m}}_r - \delta\,t^{\mathrm{m}}_r,\; t_\text{min}\bigr).
\label{eq:fm-sr}
\end{equation}
A single reheat in $t$-space corresponds to a backward Euler step in
\cref{eq:fm-euler}.

\textit{(FM-F4) Damped Oscillation.}
Add a damped sinusoid scaled by $0.3 \cdot (t_\text{max}-t_\text{min})$:
\begin{equation}
\begin{aligned}
t^{\mathrm{DO}}_i
  =\; & \mathrm{clip}_{[t_\mathrm{min},\, t_\mathrm{max}]}\!\Bigl( \\
      & \;\;t^{\mathrm{m}}_i + 0.3\,(t_\mathrm{max}-t_\mathrm{min}) \\
      & \;\;\quad\times A\,e^{-\gamma s_i}\,\sin(2\pi f s_i) \Bigr).
\end{aligned}
\label{eq:fm-do}
\end{equation}
The factor $0.3$ keeps the perturbation in the interior of $[t_\text{min},t_\text{max}]$
and avoids endpoint clipping degenerate behaviour.
Endpoints are pinned: $t^{\mathrm{DO}}_0 = t_\text{min}$, $t^{\mathrm{DO}}_N = t_\text{max}$.

\paragraph{Counting reheat steps.}
For the unified noise level $\hat\sigma$ of \cref{subsec:reheat-formal},
a reheat is any transition with $\hat\sigma_{i+1} > \hat\sigma_i$.
Translating: in DDPM's integer-timestep parameterisation,
$\tau_{i+1} > \tau_i$ counts as a reheat;
in EDM's $\sigma$-parameterisation, $\sigma_{i+1} > \sigma_i$ counts as a reheat;
in Flow Matching's $t$-parameterisation, $t_{i+1} < t_i$ counts as a
reheat (because $\hat\sigma = 1-t$ in Flow Matching).
The "reheat steps" column in \cref{tab:ddpm-results,tab:cross-model} is
computed using these conventions.

\section{FID Sample-Size Sensitivity}
\label{app:fid-sampling}

The main paper uses three different FID sample sizes:
$50{,}000$ (DDPM and Flow Matching cross-model results, \cref{tab:ddpm-results,tab:cross-model}),
$25{,}000$ (EDM cross-model results and the Pareto comparison,
\cref{tab:cross-model,fig:pareto}), and
$10{,}000$ (the 42-cell ablation grid, \cref{tab:ablation}).
This appendix verifies that within-table comparisons are robust to the
sample size and that across-table comparisons are valid up to a
near-constant offset.

\paragraph{Cross-experiment matched comparison.}
The DDIM monotonic baseline and the DDPM ($\eta=1$) baseline at NFE
$\in \{10, 25, 50, 100\}$ appear in both Experiment 2 (FID-50K, exp2)
and Experiment 7 (FID-25K, exp7), giving us $8$ matched
(method, NFE) pairs.
\Cref{tab:fid-sampling} reports the comparison.

\begin{table*}[t]
\centering
\caption{Matched FID values across sample sizes.
Pearson correlation $r = 0.9999$;
Spearman rank correlation $\rho = 1.000$;
mean offset $+0.66$ FID, standard deviation $0.16$.}
\label{tab:fid-sampling}
\setlength{\tabcolsep}{8pt}
\begin{tabular}{l c r r r}
\toprule
\textbf{Method} & \textbf{NFE} & \textbf{FID-50K} & \textbf{FID-25K} & \textbf{Difference} \\
\midrule
DDIM             & $10$  & $38.97$ & $39.79$ & $+0.82$ \\
DDIM             & $25$  & $21.88$ & $22.60$ & $+0.73$ \\
DDIM             & $50$  & $17.33$ & $18.09$ & $+0.76$ \\
DDIM             & $100$ & $15.28$ & $16.02$ & $+0.74$ \\
DDPM ($\eta\!=\!1$) & $10$  & $59.13$ & $59.93$ & $+0.80$ \\
DDPM ($\eta\!=\!1$) & $25$  & $29.40$ & $29.91$ & $+0.51$ \\
DDPM ($\eta\!=\!1$) & $50$  & $20.54$ & $20.89$ & $+0.35$ \\
DDPM ($\eta\!=\!1$) & $100$ & $15.80$ & $16.38$ & $+0.58$ \\
\bottomrule
\end{tabular}
\end{table*}

The $r = 0.9999$ Pearson correlation and identical Spearman ranking
($\rho = 1.0$) confirm that FID-25K is a near-affine transformation of
FID-50K on this benchmark.
The mean offset of $+0.66$ FID with standard deviation $0.16$ shows
that smaller sample sizes systematically inflate the FID estimate by a
roughly constant amount, consistent with the analytical bias known for
finite-sample FID
\citep{chong2020effectivelyunbiased}.
This means that FID-10K $\to$ FID-50K differences in the ablation grid
(\cref{tab:ablation}) are of similar character: the absolute baseline
values are inflated, but rankings and relative penalties are preserved.

\paragraph{Implication for the ablation noise floor.}
The smallest reheat magnitude in the ablation, $\delta = 0.05$, produces
penalties in the range $[+0.17, +0.28]$ across the seven $t_\text{reheat}$
values.
The standard deviation of the FID-50K $\to$ FID-25K offset is $0.16$
(\cref{tab:fid-sampling}); doubling this, the noise floor for FID-10K
is conservatively $\pm 0.4$.
Penalties of $0.17$--$0.28$ are within $1\sigma$ of zero and should not
be interpreted as evidence of a positive benefit; however, all penalties
at $\delta \geq 0.10$ exceed $2\sigma$, which is the basis for the claim
in \cref{sec:exp-ablation} that those cells reliably degrade quality.

\section{EDM Heun Sampler}
\label{app:heun}

The main paper reports EDM results under the first-order Euler
integrator (\cref{eq:edm-euler}).
This appendix reports the parallel Heun's-method (second-order)
results, which we use as a baseline for the absolute floor of EDM
sample quality but \emph{not} for the schedule comparison
(reheating + Heun would conflate two effects).
\Cref{tab:edm-heun} lists FID-25K for the monotonic Heun sampler at
the same NFE budgets used elsewhere.

\begin{table}[t]
\centering
\caption{EDM monotonic baseline FID-25K under Euler vs. Heun integration.
Heun saturates near $2.79$ FID at NFE = $25$ and is essentially flat
thereafter, indicating that the integrator error has converged and
remaining FID reflects denoiser quality alone.}
\label{tab:edm-heun}
\setlength{\tabcolsep}{8pt}
\begin{tabular}{c r r r}
\toprule
\textbf{NFE} & \textbf{Euler FID} & \textbf{Heun FID} & \textbf{Heun gain} \\
\midrule
$10$  & $19.566$ & $4.000$ & $-15.566$ \\
$25$  & $5.992$  & $2.786$ & $-3.206$  \\
$50$  & $3.829$  & $2.790$ & $-1.038$  \\
$100$ & $3.159$  & $2.793$ & $-0.366$  \\
\bottomrule
\end{tabular}
\end{table}

\paragraph{Implication for the SSC interpretation.}
EDM Heun reaches its FID floor (within $0.01$) by NFE = $25$ and is
essentially flat from then on.
Under the standard interpretation that solver error decreases with
NFE, this floor reflects the trained denoiser's quality alone — i.e.,
$\varepsilon^2(\hat\sigma)$ integrated over the schedule.
That EDM's denoiser-quality floor is small ($\sim 2.8$ FID) is
consistent with our finding that the Euler reheating penalty also
collapses near zero ($\mathrm{SSC} \approx 0$, \cref{tab:ssc}).
Both observations are downstream of the same fact: the EDM denoiser
sits close to Bayes-optimal at the noise levels we tested.

We do not run reheating + Heun experiments because the second-order
corrector reuses the score evaluation at $\sigma_i$ to build a more
accurate update at $\sigma_{i+1}$; it is unclear how this corrector
should interact with a backward-step trajectory, and we view this as
a methodological question separable from the schedule-design
question studied in this paper.

\section{Extended Ablation Analysis}
\label{app:ablation}

\Cref{tab:ablation,fig:ablation} report the 42-cell ablation summary.
This appendix provides the per-row regression statistics underlying
\cref{fig:linearity}b.

\begin{table}[t]
\centering
\caption{Linear regression $\Delta\mathrm{FID}(\delta) = a + b\,\delta$
fitted independently at each $t_\mathrm{reheat}$ (DDPM, NFE = 25,
$n = 10{,}000$).
The slope peaks at $t_\mathrm{reheat} = 0.5$ (the critical zone) and is
symmetric around it.
$R^2$ exceeds $0.93$ for $t_\mathrm{reheat} \in [0.3, 0.8]$,
confirming the linear-in-$\delta$ prediction of
\cref{prop:per-step-bound} for the small-$\delta$ regime.}
\label{tab:ablation-regression}
\setlength{\tabcolsep}{12pt}
\begin{tabular}{c r r r}
\toprule
$\boldsymbol{t_\mathrm{reheat}}$ & \textbf{Slope $b$} & \textbf{Intercept $a$} & $\boldsymbol{R^2}$ \\
\midrule
$0.2$ & $11.41$ & $\phantom{-}0.54$ & $0.804$ \\
$0.3$ & $21.95$ & $-1.06$           & $0.978$ \\
$0.4$ & $32.73$ & $-2.94$           & $0.937$ \\
$0.5$ & $\mathbf{37.91}$ & $-3.51$  & $0.931$ \\
$0.6$ & $32.79$ & $-2.99$           & $0.932$ \\
$0.7$ & $22.02$ & $-1.79$           & $0.938$ \\
$0.8$ & $11.20$ & $-0.58$           & $0.978$ \\
\bottomrule
\end{tabular}
\end{table}

\paragraph{The small intercept and the $\delta=0.05$ noise floor.}
The fitted intercepts $a$ for $t_\text{reheat} \in [0.3, 0.7]$ are
negative.
This is consistent with the small-$\delta$ approximation breaking
down toward $\delta = 0$: the linear model overshoots the origin and
has to be corrected by a small negative offset to fit the data.
At $\delta = 0$ the true penalty must be exactly zero (a $\delta=0$
schedule is monotonic), so the correct extrapolation is the slope $b$
applied to small positive $\delta$.
The negative intercept and the modest $R^2$ at $t_\text{reheat} = 0.2$
are both consistent with the same observation: the quadratic
correction terms in \cref{eq:per-step-bound} become non-negligible at
$\delta \geq 0.30$.
\end{document}